\pgfplotsset{compat=newest}
\newcommand{\defeq}{\vcentcolon=}
\newtheorem{theorem}{Theorem}
\newtheorem{lemma}{Lemma}
\title{Taylor-Lagrange Neural Ordinary Differential Equations:\\ Toward Fast Training and Evaluation of Neural ODEs}
\author{
Franck Djeumou\thanks{These authors contributed equally. \\ The code is available at \href{https://github.com/wuwushrek/TayLaNets}{https://github.com/wuwushrek/TayLaNets}.}\textsuperscript{\rm 1} \and
Cyrus Neary \footnotemark[1]\textsuperscript{\rm 1} \and
Eric Goubault\textsuperscript{\rm 2} \and
Sylvie Putot\textsuperscript{\rm 2} \and
Ufuk Topcu\textsuperscript{\rm 1} \\
\affiliations
\textsuperscript{1}The University of Texas at Austin, United States\\
\textsuperscript{2}LIX, CNRS, \'Ecole Polytechnique, Institut Polytechnique de Paris, France\\
\{fdjeumou, cneary, utopcu\}@utexas.edu, \; \{goubault, putot\}@lix.polytechnique.fr
}
\begin{document}


\newcommand{\regularizationCoef}{\lambda}
\newcommand{\regTerm}{\Lambda}
\newcommand{\penaltyCoef}{\mu}
\newcommand{\penaltyTerm}{U}

\newcommand{\state}{x}
\newcommand{\trueNextState}{y}
\newcommand{\stateDim}{n}
\newcommand{\stateSpace}{\mathbb{R}^{\stateDim}}
\newcommand{\dynFun}{f}
\newcommand{\controlInput}{u}
\newcommand{\controlSpace}{\mathcal{U}}
\newcommand{\trajectory}{\tau}
\newcommand{\controlDim}{m}
\newcommand{\timeval}{t}

\newcommand{\vectorField}{f}

\newcommand{\numIntervals}{H}
\newcommand{\timestep}{\Delta t}
\newcommand{\expansionOrder}{p}
\newcommand{\taylorLagrange}{TL}
\newcommand{\taylor}{T}

\newcommand{\remainder}{\mathcal{R}}
\newcommand{\midpoint}{\Gamma}
\newcommand{\midpointTime}{\xi}
\newcommand{\midpointUnknownTerm}{\Bar{\midpoint}}

\newcommand{\nnParamsAll}{\Theta}
\newcommand{\nnParams}{\theta}
\newcommand{\nnParamsFixed}{\hat{\nnParams}}
\newcommand{\numParams}{k}

\newcommand{\nnParamsMidpoint}{\phi}
\newcommand{\nnParamsMidpointFixed}{\hat{\nnParamsMidpoint}}

\newcommand{\neuralODE}{\mathrm{NODE}}
\newcommand{\TLneuralODE}{\mathrm{TLNODE}}
\newcommand{\predictionTime}{T}
\newcommand{\initTime}{t_0}
\newcommand{\initState}{\state_{0}}

\newcommand{\predNextState}{\Gamma}
\newcommand{\loss}{\mathcal{L}}
\newcommand{\dataset}{\mathcal{D}}
\newcommand{\datasetFixed}{\dataset_{\nnParamsFixed}}
\newcommand{\datasetDim}{|\dataset|}
\newcommand{\numTrainingTrajectories}{i}
\newcommand{\trajectoryTimeHorizon}{T}
\newcommand{\rollout}{n_r}
\newcommand{\odesolve}{\mathrm{ODESolve}}

\newcommand{\inequalConstr}{\Psi}
\newcommand{\equalConstr}{\Phi}
\newcommand{\constrDomain}{\mathcal{C}}
\newcommand{\numInequalConstr}{l}
\newcommand{\numEqualConstr}{v}
\newcommand{\collocationPoints}{\Omega}
\newcommand{\numCollocationPoints}{|\collocationPoints|}
\newcommand{\augmentedLagrangian}{\mathcal{L}}
\newcommand{\constrPenalty}{\mu}
\newcommand{\lagrangeVar}{\lambda}
\newcommand{\lagrangeVarAll}{\Lambda}
\newcommand{\totalNumEqualConstr}{N_{\equalConstr}}
\newcommand{\totalNumInequalConstr}{N_{\inequalConstr}}

\newcommand{\collectionUnknownTerms}{G_{\nnParamsAll}}

\newcommand{\constrTol}{\epsilon}

\newcommand{\massMat}{M}
\newcommand{\corForce}{C}
\newcommand{\actuationForce}{\tau}
\newcommand{\contactForce}{\mathcal{F}}
\newcommand{\jacobian}{J}

\maketitle
\begin{abstract}
    Neural ordinary differential equations (NODEs) -- parametrizations of differential equations using neural networks -- have shown tremendous promise in learning models of unknown continuous-time dynamical systems from data.
    However, every forward evaluation of a NODE requires numerical integration of the neural network used to capture the system dynamics, making their training prohibitively expensive.
    Existing works rely on off-the-shelf adaptive step-size numerical integration schemes, which often require an excessive number of evaluations of the underlying dynamics network to obtain sufficient accuracy for training.
    By contrast, we accelerate the evaluation and the training of NODEs by proposing a data-driven approach to their numerical integration.
    The proposed Taylor-Lagrange NODEs (TL-NODEs) use a fixed-order Taylor expansion for numerical integration, while also learning to estimate the expansion's approximation error.
    As a result, the proposed approach achieves the same accuracy as adaptive step-size schemes while employing only low-order Taylor expansions, thus greatly reducing the computational cost necessary to integrate the NODE.
    A suite of numerical experiments, including modeling dynamical systems, image classification, and density estimation, demonstrate that TL-NODEs can be trained more than an order of magnitude faster than state-of-the-art approaches, without any loss in performance.
\end{abstract}
\section{Introduction}

Neural ordinary differential equations (NODEs) have recently shown tremendous promise as a means to learn unknown continuous-time dynamical systems from trajectory data \cite{chen2018neural}.
By parametrizing differential equations as neural networks, as opposed to directly fitting the available trajectory data, NODEs provide compact representations of continuous-time systems that are memory-efficient and that are well understood; they allow the user to harness an existing wealth of knowledge from applied mathematics, physics, and engineering.
For example, recent works have used NODEs as a means to incorporate physics-based knowledge into the learning of dynamical systems~\cite{djeumou2021neural,menda2019structured,gupta2020structured,cranmer2020lagrangian,greydanus2019hamiltonian,finzi2020simplifying,zhong2021benchmarking}.
Furthermore, NODEs have been used to define continuous normalizing flows -- a class of invertible density models -- to learn complex probability distributions over data \cite{chen2018neural,grathwohl2018ffjord,mathieu2020riemannian,salman2018deep}.

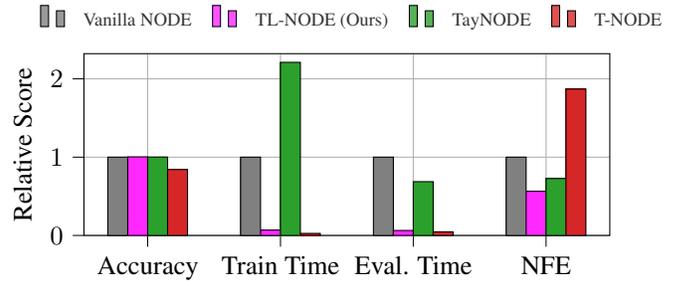
\begin{figure}[t]
    \centering
\begin{tikzpicture}

\definecolor{color0}{RGB}{128,128,128}
\definecolor{color1}{RGB}{255,41,255}
\definecolor{color2}{rgb}{0.172549019607843,0.627450980392157,0.172549019607843}
\definecolor{color3}{rgb}{0.83921568627451,0.152941176470588,0.156862745098039}

\begin{axis}[
height=4cm,
width=\columnwidth,
legend cell align={left},
legend columns = 4,
legend style={
    /tikz/every even column/.append style={column sep=0.2cm},
    font=\scriptsize,
  fill opacity=0.8,
  draw opacity=1,
  text opacity=1,
  at={(-0.1,1.3)},
  anchor=north west,
  draw=none,
},
tick align=outside,
tick pos=left,
x grid style={white!69.0196078431373!black},
xmajorgrids,
xmin=-0.255, xmax=3.705,
xtick style={color=black},
xtick={0.225,1.225,2.225,3.225},
xticklabels={Accuracy,Train Time,Eval. Time,NFE},
y grid style={white!69.0196078431373!black},
ylabel={Relative Score},
ymajorgrids,
ymin=0, ymax=2.321484375,
ytick style={color=black}
]

\draw[draw=black,fill=color0] (axis cs:-0.075,0) rectangle (axis cs:0.075,1);
\addlegendimage{ybar,ybar legend,draw=black,fill=color0}
\addlegendentry{Vanilla NODE}

\draw[draw=black,fill=color0] (axis cs:0.925,0) rectangle (axis cs:1.075,1);
\draw[draw=black,fill=color0] (axis cs:1.925,0) rectangle (axis cs:2.075,1);
\draw[draw=black,fill=color0] (axis cs:2.925,0) rectangle (axis cs:3.075,1);
\draw[draw=black,fill=color1] (axis cs:0.075,0) rectangle (axis cs:0.225,1.00367834883008);
\addlegendimage{ybar,ybar legend,draw=black,fill=color1}
\addlegendentry{TL-NODE (Ours)}

\draw[draw=black,fill=color1] (axis cs:1.075,0) rectangle (axis cs:1.225,0.068359375);
\draw[draw=black,fill=color1] (axis cs:2.075,0) rectangle (axis cs:2.225,0.0638036809815951);
\draw[draw=black,fill=color1] (axis cs:3.075,0) rectangle (axis cs:3.225,0.563636363636364);
\draw[draw=black,fill=color2] (axis cs:0.225,0) rectangle (axis cs:0.375,1.00153264534587);
\addlegendimage{ybar,ybar legend,draw=black,fill=color2}
\addlegendentry{TayNODE}

\draw[draw=black,fill=color2] (axis cs:1.225,0) rectangle (axis cs:1.375,2.2109375);
\draw[draw=black,fill=color2] (axis cs:2.225,0) rectangle (axis cs:2.375,0.687116564417178);
\draw[draw=black,fill=color2] (axis cs:3.225,0) rectangle (axis cs:3.375,0.727272727272727);
\draw[draw=black,fill=color3] (axis cs:0.375,0) rectangle (axis cs:0.525,0.841524471237356);
\addlegendimage{ybar,ybar legend,draw=black,fill=color3}
\addlegendentry{T-NODE}

\draw[draw=black,fill=color3] (axis cs:1.375,0) rectangle (axis cs:1.525,0.02625);
\draw[draw=black,fill=color3] (axis cs:2.375,0) rectangle (axis cs:2.525,0.0460736196319018);
\draw[draw=black,fill=color3] (axis cs:3.375,0) rectangle (axis cs:3.525,1.87);

\end{axis}

\end{tikzpicture}
    \vspace{-2mm}
    \caption{The proposed TL-NODE (magenta) achieves evaluation and training times that are more than an order of magnitude faster than state-of-the-art methods, without compromising any accuracy. The plot illustrates the results of using TL-NODE for a classification task on the MNIST dataset \protect\cite{deng2012mnist}. All scores are relative to those obtained by a vanilla NODE (grey), which uses an adaptive timestep numerical integrator. We compare against TayNODE \protect\cite{kelly2020learning} (green), and T-NODE (red) which uses a Taylor expansion for integration without the proposed correction employed by TL-NODE. The number of function evaluations (NFE) measures the regularity of the learned NODE (lower is better).}
    \vspace{-5mm}
    \label{fig:intro_results_plot}
\end{figure} 

\begin{figure*}[t]
    \centering
    \begin{tikzpicture}[
    title/.style={font=\fontsize{6}{6}\color{black}\ttfamily},
    typetag/.style={rectangle, draw=black!50, font=\scriptsize\ttfamily, anchor=west},
    emptytag/.style={rectangle, draw=black!0, font=\scriptsize\ttfamily, anchor=west},
    plustag/.style={circle, draw=black!50, font=\scriptsize\ttfamily, anchor=west},
    line/.style={thick, draw},
    Arr/.style={-{Latex[length=1.5mm]}},
]

\node (origin) [emptytag] { };

\node (input) [emptytag, 
                title,
                inner xsep=0mm, 
                xshift=2mm, 
                yshift=0mm,
                text width=10mm,
                align=center,
                ] {System State};

\node (dynNetTitle) [
        title, 
        anchor=west,
        right=20mm of origin.east,
        yshift=2.7mm,
        inner sep = 0mm,
        align=center,
        ] {Dynamics Network};

\node (dynNet) [
        below=2mm of dynNetTitle.south,
        inner sep = 0mm,
        align=center,
    ] {\small \(\vectorField_{\nnParams}(\cdot)\)};

\begin{scope}[on background layer]
    \node (dynNetBox) [
        draw=black,
        fill=yellow!10,
        rounded corners=0.2mm,
        fit={(dynNetTitle) (dynNet)}] {};
\end{scope}

\node (autoDiffTitle) [
        title, 
        anchor=west,
        right=10mm of dynNetBox.east,
        yshift=0mm,
        inner sep = 0mm,
        text width=17mm,
        align=center,
        ] {Automatic Differentiation};

\begin{scope}[on background layer]
    \node (autoDiffBox) [
        draw=black,
        fill=black!10,
        rounded corners=0.2mm,
        fit={(autoDiffTitle)}] {};
\end{scope}

\node (tlTitle) [
        title, 
        anchor=west,
        right=40mm of autoDiffBox.east,
        yshift=2.7mm,
        inner sep = 0mm,
        text width = 25mm,
        align=center,
        ] {Truncated \(\expansionOrder^{th}\) Order Taylor Expansion of \(\state_{\timeval}\)};

\node (tl) [
        below=1mm of tlTitle.south,
        inner sep = 0mm,
        align=center,
    ] {\small \(\state_{\timeval_i} + \sum\nolimits_{l=1}^{p-1} \timestep^l \vectorField_{\nnParams}^{[l]}(\state_{\timeval_i})\)};

\begin{scope}[on background layer]
    \node (tlBox) [
        draw=black,
        fill=brown!10,
        rounded corners=0.2mm,
        fit={(tlTitle) (tl)}] {};
\end{scope}

\node (plus) [
        draw,
        anchor=west,
        right=10mm of tlBox.east,
        yshift=0mm,
        inner sep = 0mm,
        align=center,
        shape = circle,
        ] {\(+\)};
        
\node (tlOutput) [
        title,
        anchor=west,
        right=10mm of plus.east,
        yshift=0mm,
        inner sep=0mm,
        inner xsep=0mm,
        align=center,
        shape = circle,
        text width=8mm,
        ] {TL-NODE Output};

\node (midpointNetTitle) [
        title, 
        anchor=west,
        right=-5mm of autoDiffBox.east,
        yshift=-15mm,
        inner sep = 0mm,
        text width = 20mm,
        align=center,
        ] {Midpoint Prediction Network};

\node (midpointNet) [
        below=1mm of midpointNetTitle.south,
        inner sep = 0mm,
        align=center,
    ] {\small \(\midpoint_{\nnParamsMidpoint}(\cdot)\)};

\begin{scope}[on background layer]
    \node (midpointNetBox) [
        draw=black,
        fill=blue!10,
        rounded corners=0.2mm,
        fit={(midpointNetTitle) (midpointNet)}] {};
\end{scope}

\node (remainderTitle) [
        title, 
        anchor=west,
        right=10mm of midpointNetBox.east,
        yshift=2mm,
        inner sep = 0mm,
        text width = 20mm,
        align=center,
        ] {Lagrange Form of Remainder};

\node (remainder) [
        below=1mm of remainderTitle.south,
        inner sep = 0mm,
        align=center,
    ] {\small \(\timestep^l \vectorField_{\nnParams}^{[p]}(\Gamma)\)};

\begin{scope}[on background layer]
    \node (remainderBox) [
        draw=black,
        fill=blue!10,
        rounded corners=0.2mm,
        fit={(remainderTitle) (remainder)}] {};
\end{scope}


\node (remainderEnclosureTitle)[
        title, 
        anchor=west,
        above=0mm of midpointNetBox.north,
        xshift=0mm,
        yshift=1mm,
        inner sep = 0mm,
        inner ysep=0mm,
        inner xsep=0mm,
        align=left,
        ] {\underline{Remainder Estimation}};

\begin{scope}[on background layer]
    \node (remainderEnclosure) [
        draw=black,
        dashed,
        inner xsep=1.5mm,
        rounded corners=0.2mm,
        fit={(remainderEnclosureTitle) (midpointNetBox) (remainderBox)}] {};
\end{scope}


\draw [arrows={->[scale=0.5]}, thick] (input.east)+(0mm,0mm) -- node[font=\scriptsize\ttfamily, pos=1.0, above, xshift=3mm, yshift=-0.5mm, text width=1.5cm] {\(\state_{\timeval_i}\)} (dynNetBox.west);

\draw [arrows={->[scale=0.5]}, thick] (dynNetBox.east) -- node[font=\scriptsize\ttfamily, pos=0.9, above, xshift=-0.4mm, yshift=-0.5mm, text width=1.5cm] {$\vectorField_{\nnParams}(\state_{\timeval_i})$} (autoDiffBox.west);

\draw [arrows={->[scale=0.5]}, thick] (autoDiffBox.east) -- node[font=\scriptsize\ttfamily, pos=0.25, above, xshift=0.0mm, yshift=-0.5mm, text width=1.5cm]  {\(\vectorField^{[1]}_{\nnParams}(\state_{\timeval_i}), \ldots, \vectorField^{[p-1]}_{\nnParams}(\state_{\timeval_i})\)} (tlBox.west);

\node (midpointNetBoxIn1) [left=0mm of midpointNetBox.west, xshift=0.65mm, yshift=3mm, inner xsep=0.0mm, inner ysep=0.0mm] {};
\node (midpointNetBoxIn2) [left=0mm of midpointNetBox.west, xshift=0.65mm, yshift=0mm, inner xsep=0.0mm, inner ysep=0.0mm] {};
\node (midpointNetBoxIn3) [left=0mm of midpointNetBox.west, xshift=0.65mm, yshift=-3mm, inner xsep=0.0mm, inner ysep=0.0mm] {};

\draw [arrows={->[scale=0.5]}, thick] (dynNetBox.east)+(0.0mm, 0.0mm) -- node[font=\scriptsize\ttfamily, text width=2cm, below, pos=1.8] {} ++(0.3cm,0) |- (midpointNetBoxIn1);

\draw [arrows={->[scale=0.5]}, thick] (input.east)+(0.0mm, 0.0mm) -- node[font=\scriptsize\ttfamily, text width=2cm, below, pos=1.8] {} ++(0.4cm,0) |- (midpointNetBoxIn2);

\node (deltaTIntoMidpointNet)[
        title,
        inner sep = 0mm,
        left=0mm of midpointNetBoxIn3,
        xshift=-10mm,
        ] {\(\timestep\)};

\draw [arrows={->[scale=0.5]}, thick] (deltaTIntoMidpointNet.east)+(0.5mm, 0.0mm) -- node[font=\scriptsize\ttfamily, text width=2cm, below, pos=1.8] {} ++(0.4cm,0) |- (midpointNetBoxIn3);

\node (remainderBoxIn1) [left=0mm of remainderBox.west, xshift=0.65mm, yshift=2mm, inner xsep=0.0mm, inner ysep=0.0mm] {};
\node (remainderBoxIn2) [left=0mm of remainderBox.west, xshift=0.65mm, yshift=-2mm, inner xsep=0.0mm, inner ysep=0.0mm] {};

\draw [arrows={->[scale=0.5]}, thick] (remainderBoxIn2.west)+(-9.0mm, 0.0mm) -- node[font=\scriptsize\ttfamily, text width=2cm, below, xshift=8mm, yshift=0.5mm] {\(\midpoint\)} (remainderBoxIn2);

\draw [arrows={->[scale=0.5]}, thick] (autoDiffBox.east)+(0.0mm, 0.0mm) -- node[font=\scriptsize\ttfamily, text width=2cm, right, yshift=-4mm, xshift=-2.5mm] {\(\vectorField^{[p]}_{\nnParams}(\state_{\timeval_i})\)} ++(2.0cm,0) |- (remainderBoxIn1);

\draw [arrows={->[scale=0.5]}, thick] (tlBox.east)+(0.0mm, 0.0mm) -- node[font=\scriptsize\ttfamily, text width=2cm, right, yshift=-4mm] {} (plus.west);

\draw [arrows={->[scale=0.5]}, thick] (remainderBox.east)+(0.0mm, 0.0mm) -- node[font=\scriptsize\ttfamily, text width=2cm, above, xshift=10mm, yshift=0mm, text width=35mm] {\tiny Correction for approximation error of Taylor expansion} ++(2.0cm,0) -| (plus.south);

\draw [arrows={->[scale=0.5]}, thick] (plus.east)+(0.0mm, 0.0mm) -- node[font=\scriptsize\ttfamily, text width=2cm, above, xshift=5.5mm, yshift=-1mm] {\(\hat{\state}_{\timeval_{i+1}}\)} (tlOutput);
        
\end{tikzpicture}
    \caption{
    Taylor-Lagrange NODE (TL-NODE): an illustration of forward model evaluations. Given the system state \(\state_{\timeval_i}\) at time \(\timeval_{i}\), TL-NODE outputs a prediction of the state \(\hat{\state}_{\timeval_{i+1}}\) at future time \(\timeval_{i+1} = \timeval_{i} + \timestep\). The dynamics network (yellow) parametrizes the differential equation being modeled. We use a truncated Taylor expansion (brown) of the state dynamics \(\state_{\timeval}\) to predict the future state. A separate midpoint prediction network (blue) is trained to estimate the remainder of the expansion, which is used as a correction for the model's prediction.}
    \label{fig:TLNODE_illustration}
\end{figure*}
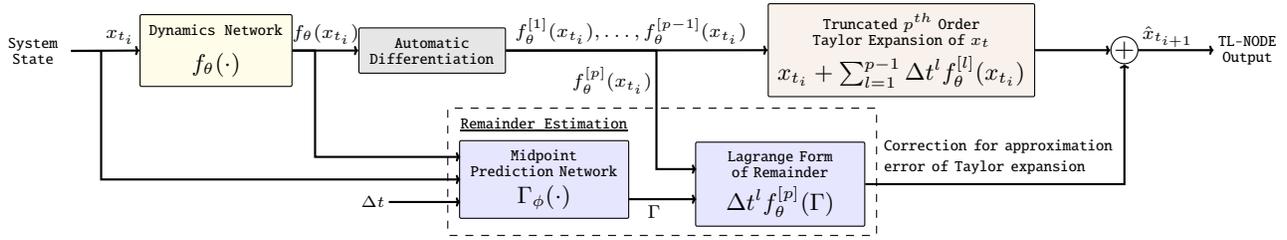 

However, the training of NODEs can become prohibitively expensive \cite{grathwohl2018ffjord,kelly2020learning,Finlay2020HowTT}.
In particular, every forward evaluation of the NODE requires the numerical integration of the underlying neural network parametrizing the system dynamics.
Existing methods for the training of NODEs use off-the-shelf adaptive step-size numerical integration schemes for this purpose.
However, in order to obtain sufficient accuracy, such integration schemes have been shown in practice to require an excessive number of evaluations of the underlying neural network.
Furthermore, the severity of this problem has been shown to grow as the training of the neural ODE progresses; while the neural ODE learns to fit the available data, it does not learn a representation of the dynamics that is easy to integrate \cite{Finlay2020HowTT}.
These computational issues render the training of neural ODEs on large datasets intractable, and they also prevent neural ODEs from being deployed in applications requiring repeated fast online predictions; such as for model-predictive control of robotic systems.

To address the above issues, we present Taylor-Lagrange NODEs (TL-NODEs); 
we use a truncated Taylor expansion of the underlying neural network to predict the future system state, and we train a separate neural network to correct this prediction according to the Lagrange form of the expansion's remainder.
By training the second corrector network, the approach significantly reduces the computational cost necessary for accurate numerical integration, while ensuring little-to-no-loss in the accuracy of the model.
Figure \ref{fig:TLNODE_illustration} illustrates the major components of TL-NODE, which are discussed below.

    \paragraph{(1) Taylor expansions for the numerical integration of the NODE.}
    In order to integrate the NODE, we use a fixed-order Taylor expansion of the dynamical system in time.
    We take advantage of Taylor-mode automatic differentiation to efficiently compute the higher-order terms of the expansion on a GPU \cite{Bettencourt2019TaylorModeAD}.
    By specifying the number of terms to include in the expansion, we ensure that only a fixed number of evaluations of the underlying dynamics network are required per training step. 
    
    \paragraph{(2) Correcting the expansion's approximation error.}
    We use the Lagrange form of the expansion's remainder to define a correction term, which greatly improves the accuracy of the predictions obtained from the truncated expansion.
    To estimate this approximation error efficiently, we propose to use a neural network to predict the so-called midpoint value -- a point near the center of the expansion at which the approximation error can be evaluated exactly.
    While learning this midpoint value may, in general, be as complex as learning the neural ODE itself,
    we derive explicit formulas for the midpoint using assumptions on the regularity of the dynamics.
    These expressions reduce the complexity of the learning problem; only one unknown term in the expression must be learned.
    We provide upper bounds on the error of the proposed Taylor-Lagrange expansion, in terms of the error in the predicted midpoint value and the order of the expansion.
    
We demonstrate the effectiveness of the proposed approach through a suite of numerical experiments.
The experimental tasks include the integration of known dynamics, learning to predict unknown dynamics, supervised classification, and density estimation.
Figure \ref{fig:intro_results_plot} illustrates the result of applying TL-NODE to a classification task.
Across all experiments we observe that the training and the evaluation of TL-NODEs is more than an order of magnitude faster than existing NODE methods, while incurring no loss in performance.
\paragraph{Related Work.}
\label{sec:related_work}

Similarly to our work, a number of papers study how to reduce the computational cost of training neural ODEs.
After \cite{chen2018neural} initially presented the neural ODE, \cite{grathwohl2018ffjord} proposed a stochastic estimator of the likelihood to reduce computational cost when using neural ODEs for continuous normalizing flows.
\cite{kelly2020learning,Finlay2020HowTT,pmlr-v139-pal21a} propose additional regularization terms to learn neural ODEs that are easy to integrate.
\cite{ghosh2020steer} propose to regularize the neural ODE by randomly sampling the end time of the ODE during training.
However, all of these works use off-the-shelf numerical integration algorithms for the forward evaluation of the NODE.
By contrast, our work suggests a novel data-driven integration scheme, resulting in training times that are an order of magnitude faster than the current state-of-the-art. 

Meanwhile, \cite{Poli2020HypersolversTF} also suggest training an additional corrector neural network to speed up the numerical integration of NODEs.
However, they do not present a technique that is able to apply such a corrector network during the training of the NODE.
By contrast, we propose algorithms for the simultaneous training of the dynamics network and the remainder estimation network; this simultaneous training results not only in a speedup of the NODE evaluations, but also in a speedup of the NODE's training.
Furthermore, we propose a method to simplify the learning of the remainder term by taking advantage of regularity assumptions on the system dynamics.
This simplification leads to more efficient and generalizable learning of the correction term.
\section{Background}
\label{sec:preliminaries}
We begin by introducing necessary background on ordinary differential equations (ODEs) and on neural ODEs.

\paragraph{Ordinary Differential Equations (ODEs).} 
Let the  function $\vectorField (\state, \timeval) : \stateSpace \times \mathbb{R}_+ \mapsto \mathbb{R}^\stateDim$ be \emph{Lipschitz-continuous} in $\state$ and $\timeval$. An ordinary differential equation specifies the instantaneous change of a vector-valued signal of time $\state : \mathbb{R_+} \mapsto \stateSpace$.
\begin{align}
    \dot{\state}(\timeval) = \vectorField(\state(\timeval), \timeval)\label{eq:dynamics-formulation}
\end{align}

We note that in general, the explicit dependence of \(\vectorField(\state, \timeval)\) on \(\timeval\) can be removed by adding a dimension to the state variable \(\state\).
As such, throughout the remainder of the paper, we consider autonomous systems of the form \(\dot{\state}(\timeval) = \vectorField(\state(\timeval))\).
Furthermore, for notational simplicity we use the subscript notation \(\state_{\timeval}\) in place of \(\state(\timeval)\).

Given some initial state $\initState \defeq \state_{t_0} \in \stateSpace$ at initial time $\timeval_0 \geq 0$, we wish to compute a solution to \eqref{eq:dynamics-formulation}.
In this work, we are specifically interested in predicting the value of the state $\state_{\timeval}$, at an arbitrary future point in time $\predictionTime \geq \initTime$.
The value of \(\state_{\predictionTime}\) can be found by \textit{integrating} or \textit{solving} the ODE: \(\state_{\predictionTime} = \initState + \int_{\initTime}^{\predictionTime} \vectorField (\state_{s}) \,ds.\)

\paragraph{Neural Ordinary Differential Equations (NODEs).}
Neural ODEs (NODEs) are a class of deep learning models that use a neural network \(\vectorField_{\nnParams}(\state)\) to parametrize an ODE, and that subsequently use numerical integration algorithms to evaluate the model's outputs.
More specifically, given a NODE input tuple \((\initState, \initTime, \predictionTime)\) consisting of an initial state \(\initState\), an initial time \(\initTime\), and a prediction time \(\predictionTime\), the output \(\hat{\state}_{\predictionTime} \defeq \neuralODE_{\nnParams}(\initState,  \initTime, \predictionTime)\) of the NODE with parameter vector \(\nnParams\) is given by \(\mathrm{ODESolve}(\vectorField_{\nnParams}, \initState, \initTime, \predictionTime) \approx \initState + \int_{\initTime}^{\predictionTime} \vectorField_{\nnParams}(\state_{s})ds\).
Here, \(\mathrm{ODESolve}(\vectorField_{\nnParams}, \initState, \initTime, \predictionTime)\) is a numerical approximation of the solution to the dynamics parametrized by \(\vectorField_{\nnParams}(\state)\).

We note that the particular algorithm used in place of \(\mathrm{ODESolve}(\cdot)\) influences the model's accuracy, the computational cost of forward evaluations of the model, and the computational cost of training the model.
Existing implementations of neural ODEs have typically relied on existing adaptive-step numerical integration algorithms for this purpose.
By contrast, this work proposes the use of a novel Taylor expansion based method that uses data-drive estimations of the expansion's truncation error to evaluate and train neural ODEs efficiently and accurately.
\section{Taylor-Lagrange Neural Ordinary Differential Equations (TL-NODE)}

In this section we propose TL-NODEs for the efficient and accurate training and evaluation of neural ODEs.

\subsection{Data-Driven Taylor-Lagrange Numerical Integration of the NODE Dynamics}

\label{sec:TL_numerical_integration_of_neural_ode}

Our objective is to efficiently and accurately evaluate neural ODEs through numerical integration of \(\vectorField_{\nnParams}(\cdot)\).
Toward this end, we propose to make direct use of the Taylor-Lagrange expansion of \(\state_{t}\) in time.
Taylor expansions are techniques used to approximate a function near a particular expansion point using polynomials.
We use the term Taylor-Lagrange expansion to refer to the truncated Taylor expansion including the Lagrange form of the remainder.
In order to obtain highly accurate expansions of \( \state_{t}\) without requiring an excessive number of evaluations of \(\vectorField_{\nnParams}(\cdot)\), we propose to train a separate neural network with parameters \(\nnParamsMidpoint\) to estimate the remainder term in the expansion.
We give a step-by-step description of the proposed methodology below.

\paragraph{Partitioning the prediction interval.}
We begin by partitioning the prediction interval \([\initTime, \predictionTime]\) into a collection of \(\numIntervals\) sub-intervals \([t_i, t_{i+1}]\).
For notational convenience, we assume the value of the integration time step is fixed, i.e. \(\timestep = t_{i+1} - t_{i}\) for all \(i = 0, \ldots, \numIntervals - 1\).
Given the value of the state \(\state_{t_{i}}\) at time \(t_i\), we compute an approximation of the value of the state at the next time step \(\hat{\state}_{t_{i+1}}\) via a \(p^{th}\) order Taylor-Lagrange expansion of \(\state_{t}\) about time \(\timeval_{i}\), which we denote by \(\taylorLagrange^{\timestep}_{\nnParams, \nnParamsMidpoint}(\state_{\timeval_i})\).
The output \(\TLneuralODE_{\nnParams, \nnParamsMidpoint}(\initState, \initTime, \predictionTime)\) of the neural ODE is computed by iteratively using \(\taylorLagrange^{\timestep}_{\nnParams, \nnParamsMidpoint}(\cdot)\) to approximate the integral of \(\vectorField_{\nnParams}(\cdot)\) over each sub-interval.

\paragraph{Expressing \(\boldsymbol{\taylorLagrange^{\timestep}_{\nnParams, \nnParamsMidpoint}(\cdot)}\) in Terms of \(\boldsymbol{\vectorField_{\nnParams}(\cdot)}\).}
Note that because \(\vectorField_{\nnParams}(\state)\) estimates the time derivative of the system state \(\state\), the Taylor-Lagrange expansion of \(\state_{t}\) may be expressed in terms of the \textit{Taylor coefficients} \(\vectorField_{\nnParams}^{[l]}(\state)\), which are recursively defined through the equations \(\vectorField^{[1]}_{\nnParams}(\state) = \vectorField_{\nnParams}(\state)\) and \(\vectorField^{[l+1]}_{\nnParams}(\state) = \frac{1}{l+1} [\frac{\partial \vectorField^{[l]}_{\nnParams}}{\partial \state} \vectorField_{\nnParams}](\state)\). 
Equation \eqref{eq:taylor_lagrange_expansion} accordingly presents the Taylor-Lagrange expansion of \(\state_{\timeval}\) about the point in time \(\timeval_{i}\), evaluated at the future time \(\timeval_{i+1} = \timeval_{i} + \timestep\).
\begin{equation}
\begin{split}
    \taylorLagrange_{\nnParams, \nnParamsMidpoint}^{\timestep}(\state_{\timeval_i}) \defeq \state_{\timeval_i} + \sum\nolimits_{l=1}^{p-1} & \timestep^l \vectorField_{\nnParams}^{[l]}(\state_{\timeval_i})
    \\ & + \remainder_{\nnParamsMidpoint}(\vectorField_{\nnParams}, \state_{\timeval_i}, \timestep)
\end{split}
\label{eq:taylor_lagrange_expansion}
\end{equation}
The first two terms on the right hand side of \eqref{eq:taylor_lagrange_expansion} make up the \textit{truncated} Taylor expansion of \(\state_{\timeval}\), while \(\remainder_{\nnParamsMidpoint}(\vectorField_{\nnParams}, \state_{\timeval_i}, \timestep)\) denotes an estimation of the \textit{remainder} of this truncated expansion.
More specifically, \(\remainder_{\nnParamsMidpoint}(\vectorField_{\nnParams}, \state_{\timeval_i}, \timestep)\) estimates the approximation error of the \(p^{th}\) order expansion; if we could known this value exactly, \eqref{eq:taylor_lagrange_expansion} would provide an exact evaluation of the integral \(\state_{t_i} + \int_{t_i}^{t_{i+1}} \vectorField_{\nnParams}(\state_{s}) ds\).
Below we propose a methodology to learn to accurately estimate the value of this remainder term, given \(\vectorField_{\nnParams}\), \(\state_{\timeval_i}\), and \(\timestep\) as inputs. 

\paragraph{Estimating the Remainder Term \(\boldsymbol{\remainder_{\nnParamsMidpoint}(\cdot)}\).}
To obtain accurate and generalizable estimations of the remainder term \(\remainder_{\nnParamsMidpoint}(\vectorField_{\nnParams}, \state_{\timeval_i}, \timestep)\), we begin by using Taylor's theorem to express it as \(\remainder_{\nnParamsMidpoint}(\vectorField_{\nnParams}, \state_{\timeval_i}, \timestep) = \vectorField_{\nnParams}^{[p]}(\midpoint)\).
Here, \(\midpoint \in \stateSpace\) denotes the \textit{midpoint} of the Taylor-Lagrange expansion.
More specifically, there exists some point in time \(\midpointTime\) with \(\timeval_{i} \leq \midpointTime \leq \timeval_{i+1}\) such that when we define \(\midpoint \defeq \state_{\midpointTime}\), then \(\timestep^p \vectorField_{\nnParams}^{[p]}(\midpoint)\) provides the exact value of the approximation error of the expansion.

While no closed form expression for the midpoint \(\midpoint\) exists, we propose to learn to predict its value given the state \(\state_{\timeval_i}\) and the time step \(\timestep\).
Learning to predict \(\midpoint\) directly as a function of these inputs is a challenging problem in general.
We instead propose to use the result of Theorem \ref{thm:expression-midpoint}, which provides a closed-form expression for \(\midpoint\) in terms of some unknown term \(\midpointUnknownTerm \in \stateSpace\).
By taking advantage of this expression for \(\midpoint\), we greatly simplify the task of learning to predict its value.

\begin{theorem}[Simplified Midpoint Expression]\label{thm:expression-midpoint}
    If \(f_{\theta}\) is a Lipschitz-continuous function, then there exists a function \(\bar{\Gamma} : \mathbb{R}^{n} \times \mathbb{R}_{+} \to \mathbb{R}^{n \times n}\) such that the midpoint value \(\Gamma\) of the Taylor-Lagrange expansion \(TL^{\Delta t}_{\theta, \phi}(x_{t_i})\) is related to \(x_{t_i}\), \(f_{\theta}(x_{t_i})\), and \(\bar{\Gamma}(x_{t_i}, \Delta t)\) through
    \begin{equation}
        \Gamma = x_{t_i} + \bar{\Gamma}(x_{t_i}, \Delta t) \odot  f_{\theta}(x_{t_i}),
        \label{eq:midpoint-simple}
    \end{equation}
    where \(\odot\) denotes matrix-vector multiplication.
\end{theorem}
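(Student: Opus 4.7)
The plan is to construct the matrix $\bar{\Gamma}(x_{t_i}, \Delta t)$ directly, using the integral form of the midpoint. First I would apply Taylor's theorem (already invoked in the preceding paragraph to define the remainder): there exists $\xi \in [t_i, t_{i+1}]$ with $\Gamma = x_\xi$. Integrating the ODE $\dot{x} = f_\theta(x)$ over $[t_i, \xi]$ then gives
\[
\Gamma - x_{t_i} \;=\; \int_{t_i}^{\xi} f_\theta(x_s)\, ds,
\]
so the claim reduces to showing that this displacement vector can be written as $M \cdot f_\theta(x_{t_i})$ for some $n \times n$ matrix $M$ depending only on $(x_{t_i}, \Delta t)$.

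I would then split into two cases. If $f_\theta(x_{t_i}) \neq 0$, I would set
\[
\bar{\Gamma}(x_{t_i}, \Delta t) \;:=\; \frac{(\Gamma - x_{t_i})\, f_\theta(x_{t_i})^{\top}}{\|f_\theta(x_{t_i})\|^{2}},
\]
a rank-one matrix for which $\bar{\Gamma}(x_{t_i},\Delta t) \odot f_\theta(x_{t_i}) = \Gamma - x_{t_i}$ by direct calculation. If instead $f_\theta(x_{t_i}) = 0$, the Lipschitz hypothesis together with the standard uniqueness theorem for ODEs forces the constant trajectory $x_s \equiv x_{t_i}$ to be the unique solution on $[t_i, t_{i+1}]$, so $\Gamma = x_{t_i}$, and we may simply take $\bar{\Gamma}(x_{t_i}, \Delta t) = 0$.

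The main obstacle is exactly this second case: without Lipschitz continuity one could have $\Gamma \neq x_{t_i}$ despite $f_\theta(x_{t_i}) = 0$, which would render equation \eqref{eq:midpoint-simple} unsatisfiable for any matrix whatsoever. Lipschitz continuity is precisely what rules out this pathology. The remaining work is routine bookkeeping: verifying that $\bar{\Gamma}$ is well-defined as a function of $(x_{t_i}, \Delta t)$ through the induced dependence of $\xi$, and hence of $\Gamma$, on those inputs, which can be handled by fixing any selector for $\xi$ when Taylor's theorem does not uniquely determine it.
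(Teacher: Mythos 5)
Your proof is correct, and it reaches the theorem by a genuinely different and more elementary route than the paper. The paper derives \eqref{eq:midpoint-simple} from an \emph{a priori enclosure} in the sense of interval Taylor-model reachability: a Gr\"onwall bound on \(\|x(t)-x_{t_i}\|_2\) over \([t_i,t_{i+1}]\) is rescaled into a set \(\mathcal{S}_i = x_{t_i} + [-1,1]^{n\times n}\,\Delta t\, f_\theta(x_{t_i})/(1-\sqrt{n}\,\Delta t\,\|L_{f_\theta}\|_2)\) satisfying a fixed-point inclusion, and the midpoint, being a point of the trajectory, is then parameterized as an arbitrary element of that set. You instead construct \(\bar{\Gamma}\) explicitly: the rank-one matrix \((\Gamma-x_{t_i})\,f_\theta(x_{t_i})^{\top}/\|f_\theta(x_{t_i})\|^{2}\) when \(f_\theta(x_{t_i})\neq 0\), and \(0\) otherwise, with the degenerate case disposed of by Picard--Lindel\"of uniqueness (the constant trajectory is the only solution, so \(\Gamma = x_{t_i}\)). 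That is exactly the right place to spend the Lipschitz hypothesis for a bare existence claim, and your argument has the advantage of not needing the step-size restriction \(\Delta t\,\sqrt{n}\,\|L_{f_\theta}\|_2<1\) that the paper's enclosure lemma imposes. What the paper's route buys in exchange is quantitative information: the enclosure yields explicit entrywise bounds on \(\bar{\Gamma}(x_{t_i},\Delta t)\) of order \(\Delta t\), which is what makes the parameterization practically useful --- the learned network \(\bar{\Gamma}_{\phi}\) can be given a bounded output range. Your rank-one matrix is in fact also bounded, by \((\mathrm{e}^{\|L_{f_\theta}\|_2\Delta t}-1)/\|L_{f_\theta}\|_2\) via the same Gr\"onwall estimate, but your construction does not deliver that for free.
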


Theorem~\ref{thm:expression-midpoint} is obtained using tools from interval Taylor-Langrange based reachability analysis \cite{djeumou2021fly}.
More specifically, the equation given in \eqref{eq:midpoint-simple} is derived from explicit formulas for the so-called \emph{apriori} enclosure -- a set derived from the local Lipschitzness of \(\vectorField_{\nnParams}\) that is guaranteed to contain the value of \(\taylorLagrange^{\timestep}_{\nnParams, \nnParamsMidpoint}(\state_{\timeval_i})\).
A proof of Theorem~\ref{thm:expression-midpoint} is provided in Appendix \ref{sec:appendix_proof_thm_1}.
We also prove that for linear dynamics, \(\midpointUnknownTerm\) does not depend on \(\state_{\timeval_i}\).

\paragraph{Estimating the Midpoint Value.}
Given the result of Theorem \ref{thm:expression-midpoint}, we propose to parameterize the unknown function \(\midpointUnknownTerm_{\nnParamsMidpoint}(\cdot)\) using a neural network with parameters \(\nnParamsMidpoint\).
For notational simplicity, we use \(\midpoint_{\nnParamsMidpoint}(\state, \timestep)\) to denote the value of the right hand side of \eqref{eq:midpoint-simple} when \(\midpointUnknownTerm(\cdot)\) is approximated by \(\midpointUnknownTerm_{\nnParamsMidpoint}(\cdot)\).
Given the predicted midpoint value \(\midpoint_{\nnParamsMidpoint}(\state, \timestep)\), we estimate the remainder term of the \(p^{th}\) order Taylor-Lagrange expansion \(\taylorLagrange^{\timestep}_{\nnParams, \nnParamsMidpoint}(\state_{\timeval_i})\) as \(\remainder_{\nnParamsMidpoint}(\vectorField_{\nnParams}, \state_{\timeval_i}, \timestep) \approx \timestep^{p}\vectorField_{\nnParams}^{[p]}(\midpoint_{\nnParamsMidpoint}(\state_{\timeval_i}, \timestep))\).

\paragraph{The Proposed TL-NODE Evaluation Algorithm.}
Algorithm \ref{alg:evaluation_alg} summarizes the proposed approach for the numerical evaluation of neural ODEs. 
In lines \(1\) and \(2\), the prediction interval \([\initTime, \predictionTime]\) is broken into \(\numIntervals\) sub-intervals.
The for loop in lines \(3-6\) iterates over these sub-intervals, and uses the midpoint prediction network \(\midpoint_{\nnParamsMidpoint}(\cdot)\) to estimate the midpoint value (line \(4\)), before using this estimate to approximate the state value \(\hat{\state}_{\timeval_{i+1}}\) at the end of the sub-interval (line 5).

\begin{algorithm}[t]
\caption{Evaluating \(\TLneuralODE_{\nnParams, \nnParamsMidpoint}(\initState, \initTime, \predictionTime)\)}
\label{alg:evaluation_alg}
\textbf{Input}: \(\initState\), \(\initTime\), \(\predictionTime\)\\
\textbf{Parameter}: \(\nnParams\), \(\nnParamsMidpoint\), \(\expansionOrder\), \(\numIntervals\)\\
\textbf{Output}: Model prediction \(\hat{\state}_{\predictionTime}\).

\begin{algorithmic}[1] 

\STATE \(\hat{\state}_{\timeval_0} \gets \initState\); \(\timestep \gets \frac{\predictionTime - \initTime}{\numIntervals}\) 
\STATE \textbf{for} \(i=0,1,\ldots, \numIntervals\) \textbf{do\{} \(t_i \gets \initTime + i \timestep\)\textbf{\}}

\FOR{\(i = 0, 1, \ldots, \numIntervals - 1\)}
    \STATE \(\midpoint \gets \hat{\state}_{\timeval_i} + \bar{\midpoint}_{\nnParamsMidpoint}(\hat{\state}_{\timeval_i}, \timestep) \odot \vectorField_{\nnParams}(\hat{\state}_i)\)
    \STATE \(\hat{\state}_{t_{i+1}} \gets \hat{\state}_{t_i} + \sum_{l=1}^{p-1}\timestep^l \vectorField_{\nnParams}^{[l]}(\hat{\state}_{i}) + \timestep^p \vectorField_{\nnParams}^{[p]}(\midpoint)\)
\ENDFOR

\RETURN \(\hat{\state}_{\numIntervals}\)

\end{algorithmic}

\end{algorithm}

\paragraph{Bounding the Error of the TL-NODE Evaluation Algorithm.}
Given a fixed dynamics function \(\vectorField_{\nnParams}(\cdot)\), we seek to bound the error on a \(p^{th}\) order Taylor-Lagrange expansion which uses a learned midpoint value predictor \(\midpoint_{\nnParamsMidpoint}(\cdot)\) to estimate the expansion's remainder \(\remainder_{\nnParamsMidpoint}(\cdot)\).
Such an error bounds straightforwardly depends on how well \(\midpoint_{\nnParamsMidpoint}(\cdot)\) approximates the true midpoint \(\midpoint\), as described in Theorem \ref{thm-approximation-error}.
A proof of Theorem~\ref{thm-approximation-error} is provided in Appendix \ref{sec:appendix_proof_thm_2}.

\begin{theorem}[Integration Accuracy]\label{thm-approximation-error}
    If the learned midpoint predictor \(\midpoint_{\nnParamsMidpoint}(\cdot)\) is a \(\mathcal{O}(\eta)\) approximator to the midpoint \(\midpoint\) of the Taylor-Lagrange expansion of \(\taylorLagrange^{\timestep}_{\nnParams, \nnParamsMidpoint}(\state_{\timeval_i})\), then \(\| \state_{\timeval_{i+1}} - \taylorLagrange^{\timestep}_{\nnParams, \nnParamsMidpoint}(\state_{\timeval_i}) \| \leq c \eta \timestep^{p}\) for some \(c > 0\) that depends on \(\vectorField_{\nnParams}\).
\end{theorem}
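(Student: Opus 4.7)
}
The plan is to reduce the integration-error bound to a local Lipschitz estimate on the $p^{th}$ Taylor coefficient $\vectorField_{\nnParams}^{[p]}$. First, I would apply Taylor's theorem with the Lagrange form of the remainder to the exact trajectory: since $\vectorField_{\nnParams}$ is Lipschitz-continuous (and in fact smooth enough for the recursive derivatives $\vectorField_{\nnParams}^{[l]}$ to be well-defined up to order $p$), there exists a true midpoint $\midpoint \in \stateSpace$ (given by Theorem~\ref{thm:expression-midpoint}) such that
\begin{equation*}
    \state_{\timeval_{i+1}} = \state_{\timeval_i} + \sum\nolimits_{l=1}^{p-1} \timestep^{l}\, \vectorField_{\nnParams}^{[l]}(\state_{\timeval_i}) + \timestep^{p}\, \vectorField_{\nnParams}^{[p]}(\midpoint).
\end{equation*}
On the other hand, the Taylor-Lagrange approximation defined in \eqref{eq:taylor_lagrange_expansion} is, by construction,
\begin{equation*}
    \taylorLagrange^{\timestep}_{\nnParams, \nnParamsMidpoint}(\state_{\timeval_i}) = \state_{\timeval_i} + \sum\nolimits_{l=1}^{p-1} \timestep^{l}\, \vectorField_{\nnParams}^{[l]}(\state_{\timeval_i}) + \timestep^{p}\, \vectorField_{\nnParams}^{[p]}(\midpoint_{\nnParamsMidpoint}(\state_{\timeval_i}, \timestep)).
\end{equation*}

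The second step is to subtract these two expressions; the truncated sums cancel identically, leaving
\begin{equation*}
    \state_{\timeval_{i+1}} - \taylorLagrange^{\timestep}_{\nnParams, \nnParamsMidpoint}(\state_{\timeval_i}) = \timestep^{p}\bigl(\vectorField_{\nnParams}^{[p]}(\midpoint) - \vectorField_{\nnParams}^{[p]}(\midpoint_{\nnParamsMidpoint}(\state_{\timeval_i}, \timestep))\bigr).
\end{equation*}
Taking norms and invoking local Lipschitz-continuity of $\vectorField_{\nnParams}^{[p]}$, with some Lipschitz constant $L_{p}$ determined by $\vectorField_{\nnParams}$ on a compact neighborhood containing both midpoints, gives $\|\state_{\timeval_{i+1}} - \taylorLagrange^{\timestep}_{\nnParams, \nnParamsMidpoint}(\state_{\timeval_i})\| \leq L_{p}\, \timestep^{p}\, \|\midpoint - \midpoint_{\nnParamsMidpoint}(\state_{\timeval_i}, \timestep)\|$. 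Finally, applying the hypothesis that $\midpoint_{\nnParamsMidpoint}(\cdot)$ is an $\mathcal{O}(\eta)$ approximator of $\midpoint$ yields $\|\midpoint - \midpoint_{\nnParamsMidpoint}(\state_{\timeval_i}, \timestep)\| \leq c_{0}\,\eta$ for some constant $c_{0}$, and setting $c \defeq L_{p} c_{0}$ delivers the claimed bound $c\,\eta\,\timestep^{p}$.

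The step I expect to require the most care is justifying the Lipschitz constant $L_{p}$ of $\vectorField_{\nnParams}^{[p]}$: by the recursive definition $\vectorField_{\nnParams}^{[l+1]} = \tfrac{1}{l+1}\,(\partial \vectorField_{\nnParams}^{[l]}/\partial \state)\,\vectorField_{\nnParams}$, higher-order Taylor coefficients involve successively higher derivatives of $\vectorField_{\nnParams}$, so Lipschitzness of $\vectorField_{\nnParams}^{[p]}$ genuinely requires $\vectorField_{\nnParams} \in C^{p}$ with bounded derivatives on the region of interest. For the neural networks used in this paper (built from smooth activations), this regularity is standard; restricting the estimate to a compact set containing both $\midpoint$ and $\midpoint_{\nnParamsMidpoint}(\state_{\timeval_i},\timestep)$ (which is automatic for small $\timestep$ by Theorem~\ref{thm:expression-midpoint} and the assumed $\mathcal{O}(\eta)$ bound) then makes $L_{p}$ finite and absorbs all remaining dependence on $\vectorField_{\nnParams}$ into the constant $c$.
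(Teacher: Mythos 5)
Your proposal is correct and follows essentially the same route as the paper's own proof: write the exact state as the truncated expansion plus the exact remainder $\timestep^{p}\vectorField_{\nnParams}^{[p]}(\midpoint)$, cancel the truncated sums against the TL-NODE output, and bound the resulting difference $\timestep^{p}\|\vectorField_{\nnParams}^{[p]}(\midpoint)-\vectorField_{\nnParams}^{[p]}(\midpoint_{\nnParamsMidpoint})\|$ by the Lipschitz constant of the $p$-th Taylor coefficient times the $\mathcal{O}(\eta)$ midpoint error. Your added care about the $C^{p}$ regularity needed for $\vectorField_{\nnParams}^{[p]}$ to be (locally) Lipschitz is a point the paper glosses over, but it does not change the argument.
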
   

\paragraph{A Note on the Evaluating the Taylor Coefficients.}
The Taylor coefficients \(\vectorField_{\nnParams}^{[1]}(\cdot), \ldots, \vectorField_{\nnParams}^{[p]}(\cdot)\) can in principle be evaluated using repeated application of forward-mode automatic differentiation to iteratively compute the Jacobian-vector products \([\frac{\partial \vectorField^{[l]}_{\nnParams}}{\partial x} \vectorField_{\nnParams}](\state)\). 
However, doing so would incur a time cost of \(\mathcal{O}(exp(\expansionOrder))\).
We instead use \textit{Taylor mode} automatic differentiation, which computes the first \(\expansionOrder\) Taylor coefficients \(\vectorField^{[1]}_{\nnParams}(\cdot), \ldots, \vectorField^{[p]}_{\nnParams}(\cdot)\) in a single pass, with a time cost of only $\mathcal{O}(p^2)$ or of $\mathcal{O}(p \log p)$, depending on the underlying operations involved \cite{Griewank2000EvaluatingD,Bettencourt2019TaylorModeAD,kelly2020learning}.

\subsection{Training Taylor-Lagrange Neural Ordinary Differential Equations}
\label{sec:training_tlnode}

Given a training dataset \(\dataset\), we wish to train both components of the TL-NODE: the dynamics network \(\vectorField_{\nnParams}(\cdot)\) and the midpoint prediction network \(\midpoint_{\nnParamsMidpoint}(\cdot)\).
To do so, we propose an algorithm that alternates between training each of the components via stochastic gradient descent while keeping the parameters of the other component fixed.
We assume that each datapoint within the dataset \(\dataset = \{(\initState^j, \initTime^j, \predictionTime^j, \trueNextState^j)\}_{j=1}^{|\dataset|}\) is comprised of an initial state \(\state^j\), an initial time \(\initTime^j\), a prediction time \(\predictionTime^j\), and a labeled output value \(\trueNextState^j\).

\paragraph{Training the Dynamics Network \(\boldsymbol{\vectorField_{\nnParams}(\cdot)}\).}

We begin by holding the parameter vector \(\nnParamsMidpoint\) of the midpoint prediction network to some fixed value \(\nnParamsMidpointFixed\), and training the dynamics network \(\vectorField_{\nnParams}(\cdot)\) by solving the optimization problem \eqref{eq:loss_opt_problem} via stochastic gradient descent.
\begin{equation}
\begin{split}
    \min_{\nnParams} \sum_{(\initState^j, \initTime^j, \predictionTime^j, \trueNextState^j) \in \dataset}& \Bigl[
    \loss(\TLneuralODE_{\nnParams, \nnParamsMidpointFixed}(\initState^j, \initTime^j, \predictionTime^j), \trueNextState^j) \Bigr. \\
    & \Bigl. + \regularizationCoef \sum_{i=0}^{\numIntervals - 1} || \timestep^p \vectorField_{\nnParams}^{[p]}(\midpoint_{\nnParamsMidpointFixed}(\hat{\state}_{\timeval_i}, \timestep)) ||^2
    \Bigr]
\end{split}   
\label{eq:loss_opt_problem}
\end{equation}
Here, \(\loss(\cdot)\) is any differentiable loss function and \(\hat{\state}_{\timeval_i}\) denotes integrator-estimated intermediate state at time \(\timeval_i\).

The summation in the second line of \eqref{eq:loss_opt_problem} measures the magnitude of the remainder terms \(\remainder_{\nnParamsMidpoint}(\cdot)\) of the truncated Taylor-Lagrange expansions used for numerical integration.
We may interpret this penalty term as having two purposes.
Firstly, it acts as a regularizer that penalizes the higher-order derivatives of \(\vectorField_{\nnParams}(\cdot)\) during training.
Intuitively, by penalizing these higher order derivatives we encourage solutions that fit the data while also remaining as simple as possible.
Secondly, the penalty term prevents the TL-NODE from using \(\remainder_{\nnParamsMidpoint}(\cdot)\) to overfit the training data.
By ensuring that the remainder term of the Taylor-Lagrange expansion remains small during training, we learn a dynamics function \(\vectorField_{\nnParams}(\cdot)\) whose truncated expansions fit the training data as well as possible, while using \(\remainder_{\nnParamsMidpoint}(\cdot)\) only for small corrections.

\paragraph{Training the Midpoint Prediction Network \(\boldsymbol{\midpoint_{\nnParamsMidpoint}(\cdot)}\).}

Recall that the midpoint prediction network \(\midpoint_{\nnParamsMidpoint}(\cdot)\) plays a crucial role in accurately integrating the dynamics specified by \(\vectorField_{\nnParams}(\cdot)\).
So, as the parameters of the dynamics network \(\vectorField_{\nnParams}(\cdot)\) are updated throughout training, our estimates of \(\midpoint\), the midpoint  of the Taylor-Lagrange expansion of \(\vectorField_{\nnParams}(\cdot)\), should be updated accordingly.
We thus propose to occasionally freeze the parameters of the dynamics network \(\nnParamsFixed\) in order to train \(\midpoint_{\nnParamsMidpoint}(\cdot)\).

After fixing \(\nnParamsFixed\), we begin by generating a small dataset \(\dataset_{\nnParamsFixed}\).
The datapoints of \(\dataset_{\nnParamsFixed}\) correspond to solutions of the ODE encoded by the fixed dynamics network \(\vectorField_{\nnParamsFixed}(\cdot)\).
That is, for each \((\initState, \initTime, \predictionTime, \trueNextState) \in \dataset_{\nnParamsFixed}\) the output label \(\trueNextState\) is given by \(\mathrm{ODESolve}(\vectorField_{\nnParamsFixed}, \initState, \initTime, \predictionTime)\), where \(\mathrm{ODESolve}(\cdot)\) is a highly accurate adaptive-step ODE solver.
Once the dataset \(\dataset_{\nnParamsFixed}\) has been generated, we train \(\midpoint_{\nnParamsMidpoint}(\cdot)\) by using stochastic gradient descent to solve the optimization problem \eqref{eq:midpoint_loss_opt_problem}.
\begin{equation}
\begin{split}
    \min_{\nnParamsMidpoint} \sum_{(\initState^j, \initTime^j, \predictionTime^j, \trueNextState^j) \in \datasetFixed} &
    || \TLneuralODE_{\nnParamsFixed, \nnParamsMidpoint}(\initState^j, \initTime^j, \predictionTime^j) - \trueNextState^j||^2
\end{split}   
\label{eq:midpoint_loss_opt_problem}
\end{equation}
\paragraph{The Proposed TL-NODE Training Algorithm.}
Algorithm \ref{alg:training_algorithm} details the proposed training procedure.
Throughout training we alternate between the following two subroutines: (lines \(3\)-\(4\)) fix \(\nnParamsMidpoint\) and take \(N_{\nnParams}\) stochastic gradient descent steps to train the dynamics network \(\vectorField_{\nnParams}(\cdot)\) according to \eqref{eq:loss_opt_problem}, (lines \(5\)-\(7\)) fix \(\nnParams\) and take \(N_{\nnParamsMidpoint}\) stochastic gradient descent steps to train the midpoint prediction network \(\midpoint_{\nnParamsMidpoint}(\cdot)\) according to \eqref{eq:midpoint_loss_opt_problem}. 

\begin{algorithm}[t]
\caption{Training the TL-NODE}
\label{alg:training_algorithm}
\textbf{Input}: Training dataset $\dataset$\\
\textbf{Parameter}: \(N_{\nnParams}\), \(N_{\nnParamsMidpoint}\), \(N_{train}\), \(N_{|\dataset_{\nnParamsFixed}|}\)\\
\textbf{Output}: Model parameters \(\nnParams\), \(\nnParamsMidpoint\)

\begin{algorithmic}[1] 

\STATE Initialize parameters \(\nnParams\), \(\nnParamsMidpoint\)

\FOR{\(N_{train}\) steps} 

    \STATE Fix \(\nnParamsMidpointFixed \gets \nnParamsMidpoint\)
    \STATE \textbf{for} \(N_{\nnParams}\) steps \textbf{do\{} \(\nnParams \gets \mathrm{sgdStep}(\text{Eq. } \eqref{eq:loss_opt_problem}, \nnParams, \nnParamsMidpointFixed, \dataset)\)\textbf{\}}
    
    \STATE Fix \(\nnParamsFixed \gets \nnParams\); \(\{(\initState^j, \initTime^j, \predictionTime^j)\}_j \gets \mathrm{Sample}(\dataset, N_{|\dataset_{\nnParamsFixed}|})\)
    \STATE \(\dataset_{\nnParamsFixed} \gets \mathrm{ODESolve}(\vectorField_{\nnParamsFixed}, \{(\initState^j, \initTime^j, \predictionTime^j)\}_j)\)
    \STATE \textbf{for} \(N_{\nnParamsMidpoint}\) steps \textbf{do\{} \(\nnParamsMidpoint \gets \mathrm{sgdStep}(\text{Eq. } \eqref{eq:midpoint_loss_opt_problem}, \nnParamsFixed, \nnParamsMidpoint, \dataset_{\nnParamsFixed})\)\textbf{\}}

\ENDFOR

\RETURN \(\nnParams, \nnParamsMidpoint\)

\end{algorithmic}

\end{algorithm}
\section{Numerical Experiments}
\label{sec:numerical_experiments}

\begin{table*}[t]
  \centering
  \begin{tabular}{@{}lllllll@{}}
  \hline
    \textbf{Method} & \textbf{Train Accuracy (\%)} & \textbf{Test Accuracy (\%)} & \textbf{Train Time (min)} & \textbf{Eval. Time (ms)} & \textbf{NFE} \\
    \hline
    TL-NODE (ours) & 99.96 & \textbf{98.23} & \textbf{2.55} & \textbf{1.04} & \textbf{62}  \\
    Vanilla NODE & 99.33 & 97.87 & 42.7 & 16 & 110.6 \\
    TayNODE & 99.29 & 98.02 & 94.3 & 11 & 80.00 \\
    RNODE & 98.72 & 97.74 & 10.2 & 2.05 & 98.0 \\
    SRNODE* & \textbf{100.0} & 98.08 & 98.1 & - & 259.0 \\
    STEER* & \textbf{100.0} & 97.94 & 103 & - & 265.0 \\
    \hline
  \end{tabular}
  \caption{MNIST image classification results.
  }
  \label{tab:mnist}
\end{table*}

We demonstrate the effectiveness of TL-NODE through several numerical experiments: the numerical integration of known dynamics, the learning of unknown dynamics, a supervised classification task, and a density estimation task.
As an initial illustrative example we apply TL-NODE to linear dynamics. However, we note that the latter classification and density estimation tasks involve non-linear, time-dependent, and high-dimensional dynamics. 
Additional experimental details -- including hyperparameter selection -- are included in Appendix \ref{sec:appdix_experimental_details}.

\begin{figure}[t]
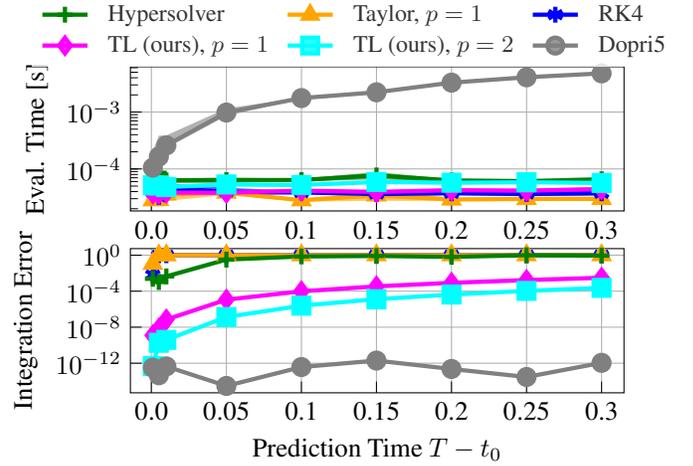

\centering
\begin{tikzpicture}

\definecolor{color0}{rgb}{1,0.647058823529412,0}
\definecolor{color1}{rgb}{1,0,1}
\definecolor{color2}{rgb}{0,1,1}

\begin{groupplot}[group style={group name = plots, group size=1 by 2, vertical sep=0.5cm, horizontal sep=0.0cm}]

\nextgroupplot[
width=0.95\columnwidth, 
height=3.5cm,
tick align=inside,
legend cell align={left},
legend columns=3,
legend style={
    font=\footnotesize,
    anchor=south west,
    at={(-0.2, 1.0)},
    fill opacity=0.8, 
    draw opacity=1, 
    text opacity=1,
    draw=none, 
    /tikz/every even column/.append style={column sep=0.15cm}
},
log basis y={10},
tick align=inside,
tick pos=left,
x grid style={white!69.0196078431373!black},
xmajorgrids,
xmin=-0.01395, xmax=0.31495,
xtick style={color=black},
y grid style={white!69.0196078431373!black},
ylabel={Eval. Time [s]},
ymajorgrids,
ymin=1.82885473987167e-05, ymax=0.00624767805561898,
ymode=log,
ytick style={color=black},
xtick={0.0, 0.05, 0.1, 0.15, 0.2, 0.25, 0.3},
xticklabels={0.0, 0.05, 0.1, 0.15, 0.2, 0.25, 0.3},
]
\input{figs/stiff_dynamics/cTime_dt_data}

\nextgroupplot[
width=0.95\columnwidth,
height=3.5cm,
tick align=inside,
legend cell align={left},
legend style={fill opacity=0.8, draw opacity=1, text opacity=1, draw=white!80!black},
log basis y={10},
tick pos=left,
x grid style={white!69.0196078431373!black},
xlabel={Prediction Time \(\predictionTime - \initTime\)},
xmajorgrids,
xmin=-0.01395, xmax=0.31495,
xtick style={color=black},
y grid style={white!69.0196078431373!black},
ylabel={Integration Error}, 
ymajorgrids,
ymin=5.14253098287814e-16, ymax=5.34612604750051,
ymode=log,
ytick style={color=black},
xtick={0.0, 0.05, 0.1, 0.15, 0.2, 0.25, 0.3},
xticklabels={0.0, 0.05, 0.1, 0.15, 0.2, 0.25, 0.3},
ytick={1e-12,1e-8, 1e-4, 1.0},
yticklabels={\(10^{-12}\),\(10^{-8}\),\(10^{-4}\), \(10^0\)},
]

\input{figs/stiff_dynamics/accuracy_dt_data}

\end{groupplot}

\end{tikzpicture}
\vspace{-5mm}
\caption{Numerical integration of known stiff dynamics. 
All algorithms other than Dopri5 use $H=1$.
Top: average time for numerical integration, as a function of the size of the prediction time interval. Bottom: average normalized integration error. Averages are taken with respect to 250 randomly sampled initial states.}
\vspace{-5mm}
\label{fig:known_dynamics_results}
\end{figure}

\subsection{Modeling a Dynamical System}
\label{sec:modeling_dynamical_system}

We begin by applying TL-NODE to the task of modeling a stiff dynamical system.
More specifically, we use the proposed Taylor-Lagrange approach to learn, and to integrate, the ODE \(\dot{\state} = A \state\), where \(\state \in \mathbb{R}^2\) and \(A \in \mathbb{R}^{2 \times 2}\) has eigenvalues \(\lambda_1 = -1\) and \(\lambda_2 = -1000\).

\subsubsection{Integration of Known Stiff Dynamics.}

To examine the accuracy and robustness of the midpoint prediction network \(\midpoint_{\nnParamsMidpoint}(\cdot)\), we begin by assuming the dynamics function \(\vectorField(\state) = A \state\) is known, and we use the proposed Taylor-Lagrange numerical integration method to predict future system states.
We note that because we assume \(\vectorField(\cdot)\) is known, there is no need to parameterize the system dynamics using a neural network \(\vectorField_{\nnParams}(\cdot)\).
However, we may still apply the method outlined in \S \ref{sec:training_tlnode} to train \(\midpoint_{\nnParamsMidpoint}(\cdot)\) to predict the approximation error of the Taylor-Lagrange expansions.

\textbf{Baselines.}
We apply both \(1^{st}\) and \(2^{nd}\) order Taylor expansions for numerical integration.
For comparison, we include the results of a fixed-step RK4 method, an adaptive-step method (Dopri5), and the \textit{Hypersolver} method \cite{Poli2020HypersolversTF}.
The tolerance parameters \(rtol\) and \(atol\) of the adaptive-step Dopri5 integrator are both set to $1.4e^{-12}$.
We also plot the result of using a Taylor expansion for integration, without including the learned approximation error term.

\textbf{Results.} Figure \ref{fig:known_dynamics_results} illustrates the numerical integration results.
For brevity, in the figure we refer to the proposed Taylor-Lagrange method for integration as TL.
We observe that TL-NODE enjoys lower integration error than all of the baseline methods except for Dopri5.
However, Dopri5 requires computation times that are more than an order of magnitude higher than that of our method.
We additionally observe that while the Hypsersolver method requires similar computation time to TL-NODE, the error of its numerical integration results are several orders of magnitude higher.
Furthermore, we note that for any prediction time intervals \(\predictionTime - \initTime\) larger than \(0.05 (s)\), the fixed-step RK4, Truncated Taylor expansion method, and Hypersolver method all have normalized prediction errors values of \(1.0\) (the highest possible value).
By contrast, our TL-NODE approach achieves an average error value on the order of \(10^{-4}\), even when \(\predictionTime - \initTime = 0.3(s)\).
This demonstrataes the robustness of the proposed approach to the size of the prediction interval.

Finally, we note that the integration error of the truncated Taylor expansion method (yellow) is several orders of magnitude larger than that of TLN.
The only difference between these methods is TLN's inclusion of the proposed correction term that learns the approximation error, demonstrating the gain in accuracy that this learned correction term provides.

\subsubsection{Learning Unknown Dynamics}

We now assume that the system dynamics \(\vectorField(\state) = A \state\) are unknown and train a TL-NODE to model the dynamical system.

\textbf{Baselines.}
We compare to Vanilla NODE, which uses adaptive-step Dopri5 for numerical integration, to a NODE trained using fixed-step RK4, and to T-NODE -- a version of our approach that also uses Taylor expansions for integration, but does not estimate their remainder term.

\textbf{Results.}
Figure \ref{fig:learn_dynamics_results} illustrates the NODE's average prediction error as a function of the number of elapsed training steps. TL-NODE achieves smiliar prediction error values to the Vanilla NODE throughout training, while the prediction errors of the other two baseline methods are twice as large.
The wall-clock training time for TL-NODE is 31.9s, for the Vanilla NODE it is 609.8s, for the RK4 NODE it is 35.8s, and for T-NODE it is 21.2s.
Algorithm 2 effectively balances the training of TL-NODE's two components: \(\vectorField_{\nnParams}(\cdot)\) and \(\midpoint_{\nnParamsMidpoint}(\cdot)\). The result is a dynamics model that is as accurate as the Vanilla NODE trained using Dopri5, but whose training and evaluation times are much faster.

\begin{figure}[t]
\centering
\begin{tikzpicture}

\definecolor{color0}{rgb}{1,0.647058823529412,0}
\definecolor{color1}{rgb}{1,0,1}

\begin{axis}[
height=3.5cm,
width=\columnwidth,
legend cell align={left},
legend columns = 2,
legend style={
    /tikz/every even column/.append style={column sep=0.5cm},
    font=\footnotesize, 
    fill opacity=0.8, 
    draw opacity=1, 
    text opacity=1, 
    draw=none,
    at={(1.0, 1.6)},
    },
tick align=inside,
tick pos=left,
x grid style={white!69.0196078431373!black},
xlabel={Training Steps \((\cdot 10^3)\)},
xmajorgrids,
xmin=0, xmax=20,
xtick style={color=black},
y grid style={white!69.0196078431373!black},
ylabel={Prediction Error \((\cdot10^{-5})\)},
ymajorgrids,
ymin=9.64840992828882e-07, ymax=0.000129321379371469,
ytick style={color=black},
ytick={0.000001, 0.00005, 0.0001},
yticklabels={1.0, 5.0, 10},
scaled y ticks=false
]

\addplot [ultra thick, color1, mark=diamond*, mark size=3, mark options={solid}]
table {%
0 9.0479850769043e-05
1 7.9035758972168e-05
2 6.96182250976562e-05
3 6.04391098022461e-05
4 5.13792037963867e-05
5 4.3034553527832e-05
6 3.5405158996582e-05
7 3.02791595458984e-05
8 2.61068344116211e-05
9 2.22921371459961e-05
10 1.88350677490234e-05
11 1.69277191162109e-05
12 1.54972076416016e-05
13 1.20401382446289e-05
14 1.10864639282227e-05
15 9.77516174316406e-06
16 8.70227813720703e-06
18 7.86781311035156e-06
19 7.27176666259766e-06
20 6.79492950439453e-06
};
\addlegendentry{TL-NODE (ours), \(p=1\)}

\addplot [ultra thick, blue, mark=asterisk, mark size=3, mark options={solid}]
table {%
0 0.000108003616333008
1 9.94205474853516e-05
2 9.10758972167969e-05
3 8.38041305541992e-05
4 7.5221061706543e-05
5 6.7591667175293e-05
6 6.27040863037109e-05
7 5.92470169067383e-05
8 5.76972961425781e-05
9 5.78165054321289e-05
10 5.56707382202148e-05
11 5.6147575378418e-05
12 5.55515289306641e-05
13 5.47170639038086e-05
14 5.45978546142578e-05
15 5.41210174560547e-05
16 5.30481338500977e-05
17 5.28097152709961e-05
18 5.28097152709961e-05
19 5.28097152709961e-05
20 5.24520874023438e-05
};
\addlegendentry{RK4}

\addplot [ultra thick, black, mark=x, mark size=3, mark options={solid}]
table {%
0 0.000123500823974609
1 0.000118374824523926
3 0.000105977058410645
4 0.000101923942565918
5 9.47713851928711e-05
6 9.31024551391602e-05
7 9.1552734375e-05
8 9.03606414794922e-05
9 8.63075256347656e-05
10 8.2850456237793e-05
11 8.24928283691406e-05
12 8.22544097900391e-05
13 8.20159912109375e-05
14 8.07046890258789e-05
16 7.91549682617188e-05
17 7.9035758972168e-05
18 7.9035758972168e-05
19 7.85589218139648e-05
20 7.84397125244141e-05
};
\addlegendentry{T-NODE, \(p=2\)}

\addplot [ultra thick, white!50.1960784313725!black, mark=o, mark size=3, mark options={solid}]
table {%
0 8.29696655273438e-05
2 6.42538070678711e-05
3 5.57899475097656e-05
4 4.67300415039062e-05
5 4.00543212890625e-05
6 3.56435775756836e-05
7 3.17096710205078e-05
8 2.98023223876953e-05
9 2.75373458862305e-05
10 2.49147415161133e-05
11 2.16960906982422e-05
12 2.02655792236328e-05
13 1.93119049072266e-05
14 1.78813934326172e-05
15 1.60932540893555e-05
16 1.49011611938477e-05
17 1.29938125610352e-05
18 1.2516975402832e-05
19 1.13248825073242e-05
20 1.10864639282227e-05
21 1.06096267700195e-05
22 1.02519989013672e-05
24 9.89437103271484e-06
25 9.41753387451172e-06
27 9.05990600585938e-06
28 8.70227813720703e-06
31 8.22544097900391e-06
32 8.10623168945312e-06
34 7.86781311035156e-06
35 7.74860382080078e-06
37 7.51018524169922e-06
40 7.39097595214844e-06
};
\addlegendentry{Vanilla NODE}
\end{axis}

\end{tikzpicture}
\vspace{-5mm}
\caption{
    Predicting unknown dynamics over a prediction time step of \(\predictionTime - \initTime = 0.01 (s)\).
    We plot the average mean square error of the predicted state as a function of the elapsed training steps.
    }
\vspace{-2mm}
\label{fig:learn_dynamics_results}
\end{figure}
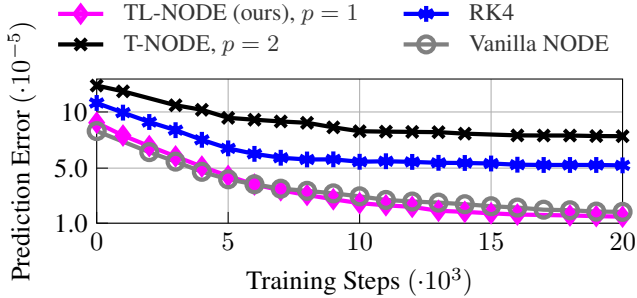
\subsection{Supervised Classification}
\label{sec:supervised_classification}

We train a TL-NODE model to perform classification on the MNIST dataset \cite{deng2012mnist}. 
Our model follows the architecture of the neural ODE-based MNIST classifier presented in the work of~\cite{kelly2020learning} and further used in~\cite{pmlr-v139-pal21a} for benchmarking. Specifically, the model uses a two-layered neural network of size $100$ and $728$ (size of the images) with sigmoid-based non linearities to parameterize the dynamics function \(\vectorField_{\nnParams}(\cdot)\). The NODE outputs propagate through a linear classifier to estimate of the image labels.

\textbf{Baselines.}
We compare the proposed Taylor-Lagrange networks with state-of-the-art NODE algorithms. 
More specifically, we compare to RNODE~\cite{Finlay2020HowTT} and TayNode~\cite{kelly2020learning} using the source code provided by the respective authors. 
We implicitly also compare our results with other regularization techniques such STEER~\cite{ghosh2020steer} and SRNODE~\cite{pmlr-v139-pal21a} thanks to the thorough experiments provided in \cite{pmlr-v139-pal21a} for a similar model of the MNIST classification problem.

\textbf{Results.}
Table~\ref{tab:mnist} lists the experimental results. 
TL-NODE achieves evaluation and training times that are more than an order of magnitude faster than the baseline approaches, while also achieving the highest accuracy on the test dataset. TL-NODE also learns a dynamics network \(\vectorField_{\nnParams}(\cdot)\) that requires the smallest number of function evaluations (NFE) when it is being numerically integrated using an adaptive-step integrator.
The low NFE score of TL-NODE indicates that the regularization term in \eqref{eq:loss_opt_problem} is effective at producing learned dynamics networks \(\vectorField_{\nnParams}(\cdot)\) that are easy to numerically integrate.
\subsection{Density Estimation}
\label{sec:density_estimation_tasks}
We apply TL-NODEs to train continuous-normalizing-flow-based generative models \cite{chen2018neural,grathwohl2018ffjord} to approximate the distribution of the MiniBooNE dataset \cite{roe2005boosted,papamakarios2017masked}.
\begin{table}[H]
  \centering
  \begin{tabular}{@{}llll@{}}
  \hline
    \textbf{Method} & \textbf{Loss (nat)} & \textbf{Train Time (min)} & \textbf{NFE}\\
    \hline
    TL-NODE (ours) & \textbf{9.62} & 12.3 & \textbf{167.9}\\
    Vanilla NODE & 9.74  & 59.7 & 183.8\\
    TayNODE & 9.75 & 148.3 & 168.2\\
    RNODE & 9.78 & \textbf{10.32} & 182.0\\
    \hline
  \end{tabular}
  \caption{Density estimation results.}
  \label{tab:miniboone}
\end{table}
\vspace{-3mm}
\textbf{Results.}
TL-NODE achieves the best loss score and the lowest required number of function evaluations (NFE) in comparison with the baseline approaches.

\section{Conclusions}

We present Taylor-Lagrange Neural Ordinary Differential Equations (TL-NODEs): a class of neural ODEs (NODEs) that use fixed-order Taylor expansions for numerical integration during NODE training and evaluation. 
TL-NODEs also train a separate neural network to predict the expansion's remainder, which is used as a correction term to improve the accuracy of the NODE's outputs.
We demonstrate that TL-NODEs enjoy evaluation and training times that are an order of magnitude faster than the current state-of-the-art, without any loss in accuracy.
Future work will aim to apply the accelerated NODE evaluation times to the online model-based control of unknown dynamical systems.

\section*{Acknowledgements}
This work was supported in part by ARL W911NF2020132, AFOSR FA9550-19-1-0005, and NSF 1646522.


\bibliographystyle{named}
\bibliography{bibliography}

\onecolumn

\begin{center} 
    \begin{LARGE}
        \textbf{Supplementary Material - Taylor-Lagrange Neural Ordinary Differential Equations: Toward Fast Learning and Evaluation of Neural ODEs}
    \end{LARGE}
\end{center}

\appendix
\setcounter{theorem}{0}

\section{Proof of Theorem \ref{thm:supp-expression-midpoint}}\label{sec:appendix_proof_thm_1}

The proof of this theorem is adapted from the results provided in Theorem $2$ of~\cite{djeumou2021fly}.

We assume that the dynamics function $\vectorField_{\nnParams}$ is locally Lipschitz continuous. Indeed, most common activation functions enforce Lipschitz continuity of the neural network encoding $\vectorField_{\nnParams}$. Given that the state $\state$ lies in a domain $\mathcal{X}\subseteq \mathbb{R}^n$, The Lipschitz assumption provides that there exists a constant $L_{\vectorField_{\nnParams}} \in \mathbb{R}^n$ such that 
$$(L_{\vectorField_{\nnParams}})_k = \sup \{ L \in \mathbb{R} \: \vert \: |(\vectorField_{\nnParams})_k(x) - (\vectorField_{\nnParams})_k(y)| \leq L \|x-y\|_2, x,y \in \mathcal{X}, x \neq y\}, \; \text{for all } k \in [1,\hdots, n],$$
where $(\vectorField_{\nnParams})_k$ and $(L_{\vectorField_{\nnParams}})_k$ denote the $k$-th component of the vectors $\vectorField_{\nnParams}$ and $L_{\vectorField_{\nnParams}}$ respectively.
We use such Lipschitz assumption to provide the proof of Theorem \ref{thm:supp-expression-midpoint} in two steps.

First, we provide a bound on the rate of changes of solutions $x(t)$ of the differential equation $\dot{x}(t) = \vectorField_{\nnParams}(x(t))$.

\begin{lemma}[\textsc{Variation of trajectories of $\dot{x}(t) = \vectorField_{\nnParams}(x(t))$}]\label{lem:traj-var}
Let $x: \mathbb{R}_+ \mapsto \mathcal{X}$ be a continuous-time signal solution of the differential equation $\dot{x}(t) = \vectorField_{\nnParams}(x(t))$ from the initial state $x(t_i) = x_{t_i}$. Then, for all $t \in [t_i, t_{i+1}=t_i + \Delta t]$, we have that
\begin{align}
    \| x(t) - x_{t_i}\|_2 \leq \| \vectorField_{\nnParams}(x_{t_i})\|_2 \|L_{\vectorField_{\nnParams}}\|_2^{-1} (\mathrm{e}^{\|L_{\vectorField_{\nnParams}}\|_2 \Delta t} - 1), \label{eq:max-variation-solution},
\end{align}
\end{lemma}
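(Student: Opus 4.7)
}
The plan is to reduce the bound to a standard Grönwall-type argument applied to the integral form of the ODE. First I would rewrite the solution as
\[
x(t) - x_{t_i} \;=\; \int_{t_i}^{t} \vectorField_{\nnParams}(x(s))\, ds,
\]
which is valid since $x(\cdot)$ solves $\dot{x} = \vectorField_{\nnParams}(x)$ on $[t_i, t_{i+1}]$ and the integrand is continuous. Taking Euclidean norms and using the triangle inequality on $\vectorField_{\nnParams}(x(s)) = \vectorField_{\nnParams}(x_{t_i}) + \bigl(\vectorField_{\nnParams}(x(s)) - \vectorField_{\nnParams}(x_{t_i})\bigr)$ yields
\[
\|x(t) - x_{t_i}\|_2 \;\leq\; \int_{t_i}^{t} \Bigl(\|\vectorField_{\nnParams}(x_{t_i})\|_2 + \|\vectorField_{\nnParams}(x(s)) - \vectorField_{\nnParams}(x_{t_i})\|_2\Bigr) ds.
\]

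Next I would convert the component-wise Lipschitz constant $L_{\vectorField_{\nnParams}} \in \mathbb{R}^n$ into a usable norm bound. Since $|(\vectorField_{\nnParams})_k(x) - (\vectorField_{\nnParams})_k(y)| \leq (L_{\vectorField_{\nnParams}})_k \|x-y\|_2$ for each $k$, squaring and summing gives
\[
\|\vectorField_{\nnParams}(x) - \vectorField_{\nnParams}(y)\|_2 \;\leq\; \|L_{\vectorField_{\nnParams}}\|_2 \, \|x - y\|_2.
\]
Substituting this into the previous display and defining $g(t) \defeq \|x(t) - x_{t_i}\|_2$, $\alpha \defeq \|\vectorField_{\nnParams}(x_{t_i})\|_2$, and $L \defeq \|L_{\vectorField_{\nnParams}}\|_2$, I obtain the integral inequality
\[
g(t) \;\leq\; \alpha (t - t_i) + L \int_{t_i}^{t} g(s)\, ds, \qquad t \in [t_i, t_{i+1}].
\]

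Finally I would apply the integral form of Grönwall's lemma (equivalently, observe that $h(t) \defeq \int_{t_i}^t g(s) ds$ satisfies $h'(t) - L h(t) \leq \alpha(t-t_i)$, solve the resulting linear ODE by the integrating-factor $e^{-L(t-t_i)}$, and substitute back into the inequality for $g$) to conclude that
\[
g(t) \;\leq\; \frac{\alpha}{L}\bigl(e^{L(t - t_i)} - 1\bigr).
\]
Specializing to $t = t_{i+1} = t_i + \Delta t$ gives exactly \eqref{eq:max-variation-solution}. The only mildly delicate step is the Lipschitz-to-norm conversion, which requires care because $L_{\vectorField_{\nnParams}}$ is a vector of component-wise constants rather than a scalar operator norm; once that is in hand, the remainder is textbook Grönwall. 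I would also note that the bound is meaningful only when $L > 0$; if $L = 0$ the dynamics are constant on $\mathcal{X}$ and a limiting argument (or the trivial bound $g(t) \leq \alpha \Delta t$) recovers the statement as $L \to 0^{+}$.
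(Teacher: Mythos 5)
Your proposal is correct and follows essentially the same route as the paper's proof: integral form of the ODE, triangle inequality around $\vectorField_{\nnParams}(x_{t_i})$, the component-wise-to-$\ell_2$ Lipschitz conversion, and Grönwall's inequality (your integrating-factor variant and the paper's explicit integral form of Grönwall plus integration by parts yield the identical bound $\tfrac{\alpha}{L}(e^{L(t-t_i)}-1)$, which is then majorized by the $\Delta t$ version). Your added remarks on the vector-valued Lipschitz constant and the degenerate case $L=0$ are sensible refinements of details the paper leaves implicit.
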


\begin{proof}
    For all $t\in [t_{i}, t_{i+1]}$, we have that 
    \begin{align}
        \|x(t) - x_{t_i}\|_2 &= \|\int_{t_i}^{t} \vectorField_{\nnParams}(x(s)) ds \|_2 \label{eq:pvarsol-solution-characteristic} \\
        &\leq \int_{t_i}^{t} \|\vectorField_{\nnParams}(x(s))-\vectorField_{\nnParams}(x_{t_i})\|_2 ds +  \int_{t_i}^{t} \|\vectorField_{\nnParams}(x_{t_i})\|_2 \label{eq:pvarsol-triangle-inequality}\\
        &\leq \int_{t_i}^{t} \|L_{\vectorField_{\nnParams}}\|_2 \|x(s)-x_{t_i}\|_2 ds + \int_{t_i}^{t} \|\vectorField_{\nnParams}(x_{t_i})\|_2 ds \label{eq:pvarsol-lipschitz-bounds} \\
        &= \| \vectorField_{\nnParams}(x_{t_i})\|_2 (t-t_i) + \int_{t_i}^t \|L_{\vectorField_{\nnParams}}\|_2 \|x(s)-x_{t_i}\|_2 ds. \label{eq:pvarsol-pre-gronwall} 
    \end{align}
    We obtain ~\eqref{eq:pvarsol-solution-characteristic} since $x$ is a solution of $\dot{x}(t) = \vectorField_{\nnParams}(x(t))$. The passage from~\eqref{eq:pvarsol-solution-characteristic} to~\eqref{eq:pvarsol-triangle-inequality} results from applying the triangle inequality. We use The definition of the Lipschtiz bound to obtain~\eqref{eq:pvarsol-lipschitz-bounds}. Finally we will use the Grönwall's inequality to transform \eqref{eq:pvarsol-pre-gronwall} into \eqref{eq:max-variation-solution}. Specifically, the Grönwall's inequality is given as follows.  Let $\Psi, \alpha, \gamma$ be real-valued functions on $[t_0, T]$ with $t_0,T \in \mathbb{R}$. Suppose $\Psi$ satisfies, for all $t \in [t_0,T]$, the inequality
    \begin{align}
        \Psi (t) \leq \alpha (t) + \int_{t_0}^t \gamma (s) \Psi (s) ds, \label{eq:cond-gronwall-inequality}
    \end{align}
    with $\gamma (s) \geq 0 $ for all $s \in [t_0,T]$. Then, for all $t \in [t_0,T]$,
    \begin{align} \label{eq:gronwall-inequality}
        \Psi (t) \leq \alpha (t) + \int_{t_0}^{t} \alpha (s) \gamma (s) \exp \Big( \int_{s}^t \gamma (r) dr \Big) ds.
    \end{align}
    
    The inequality~\eqref{eq:pvarsol-pre-gronwall} satisfies the condition~\eqref{eq:gronwall-inequality} of the Grönwall's inequality with $\Psi (t) = \|x(t) - x_{t_i}\|_2$, $\alpha (t) = \| \vectorField_{\nnParams}(x_{t_i})\|_2 (t-t_i)$, and $\gamma(t) = \|L_{\vectorField_{\nnParams}}\|_2$. Thus, we have that
    \begin{align}
        \|x(t) - x_{t_i}\|_2 \leq \begin{aligned}[t]
                                    &\| \vectorField_{\nnParams}(x_{t_i})\|_2 (t-t_i) + \|L_{\vectorField_{\nnParams}}\|_2 \| \vectorField_{\nnParams}(x_{t_i})\|_2 \int_{t_i}^t (s-t_i) \mathrm{e}^{\|L_{\vectorField_{\nnParams}}\|_2(t-s)} ds.
                                \end{aligned} \label{eq:pvarsol-post-gronwall}
    \end{align}
    By integration by parts, we have that
    \begin{align}
        \int_{t_i}^t (s-t_i) \mathrm{e}^{\|L_{\vectorField_{\nnParams}}\|_2(t-s)} ds &= - \frac{t-t_i}{\|L_{\vectorField_{\nnParams}}\|_2} + \frac{1}{\|L_{\vectorField_{\nnParams}}\|_2}\int_{t_i}^t \mathrm{e}^{\|L_{\vectorField_{\nnParams}}\|_2(t-s)} ds = -\frac{t-t_i}{\|L_{\vectorField_{\nnParams}}\|_2} + \frac{\mathrm{e}^{\|L_{\vectorField_{\nnParams}}\|_2 (t-t_i)} - 1}{\|L_{\vectorField_{\nnParams}}\|_2^2}. \label{eq:pvarsol-integration-part}
    \end{align}
    Finally, by combining~\eqref{eq:pvarsol-integration-part} and~\eqref{eq:pvarsol-post-gronwall}, we obtain~\eqref{eq:max-variation-solution}.
\end{proof} 

Next, we obtain the result in Theorem~\ref{thm:supp-expression-midpoint} by exploiting the following closed-form expression of an a priori enclosure $\mathcal{S}_{i+1} \subseteq \mathcal{X}$ of the system's state $x(t_{i+1})$ at time $t_{i+1}$. That is, $x(t_{i+1}) \in \mathcal{S}_{i+1}$. Indeed, the expression below obtained for $\mathcal{S}_{i+1}$ provide insight on how to compute the Midpoint in a manner that incorporate information on the underlying dynamics $\vectorField_{\nnParams}$.
\begin{lemma}[\textsc{A priori Enclosure Estimation}]\label{lem:apriori-encl}
    Given the Lipschitz continuous function $\vectorField_{\nnParams}$, the Lipschitz bound $L_{\vectorField_{\nnParams}}$, and an initial state $x(t_i) = x_{t_i}$, then under the assumption that $\Delta t \sqrt{n} \|L_{\vectorField_{\nnParams}}\|_2 < 1 $, an a priori rough enclosure $\mathcal{S}_{i+1} \subseteq \mathcal{X}$ of $x(t_i + \Delta t)$ is given by
    \begin{align}
        x(t_i + \Delta t) \in \mathcal{S}_i = x_{t_i} + [-1,1]^{n \times n}\frac{\Delta t \vectorField_{\nnParams}(x_{t_i})}{1- \sqrt{n} \Delta t \|L_{\vectorField_{\nnParams}}\|_2 }, \label{eq:explict-fixpoint}
    \end{align}
    where $[-1,1]^{n \times n}$ is a set matrix of size $n \times n$ with elements between $-1$ and $1$. 
\end{lemma}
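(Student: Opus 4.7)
The plan is to establish the enclosure by a Picard fixed-point argument, which is the standard tool for a priori bounds in validated ODE integration. I would define the Picard operator $P(y)(t) = x_{t_i} + \int_{t_i}^{t} \vectorField_{\nnParams}(y(s))\, ds$ on continuous curves $y : [t_i, t_{i+1}] \to \mathbb{R}^n$; under the Lipschitz hypothesis on $\vectorField_{\nnParams}$, its unique fixed point is the true solution $x(\cdot)$. It therefore suffices to exhibit a closed, $P$-invariant family of curves whose values lie in $\mathcal{S}_i$: iterating $P$ from the constant curve $y^{(0)}(t) \equiv x_{t_i}$, which lies in $\mathcal{S}_i$ since $0 \in [-1,1]^{n\times n}v$, yields a sequence that remains inside this family and converges uniformly to $x$, forcing $x(t) \in \mathcal{S}_i$ throughout $[t_i, t_{i+1}]$.

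Before checking invariance I would rewrite the interval-matrix notation in a more transparent form. For any $v \in \mathbb{R}^n$, the $k$-th coordinate of $Mv$ with $M \in [-1,1]^{n\times n}$ ranges over $[-\|v\|_1, \|v\|_1]$, and different coordinates decouple because distinct rows of $M$ may be chosen independently; hence $[-1,1]^{n\times n}\, v = [-\|v\|_1, \|v\|_1]^n$. Writing $v = \timestep\,\vectorField_{\nnParams}(x_{t_i}) / (1 - \sqrt{n}\timestep\|L_{\vectorField_{\nnParams}}\|_2)$, the $P$-invariance of $\mathcal{S}_i$ reduces to the scalar-valued claim $\|P(y)(t) - x_{t_i}\|_\infty \leq \|v\|_1$ for every admissible $y$.

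The key estimate proceeds componentwise: I would split $\vectorField_{\nnParams}(y(s)) = \vectorField_{\nnParams}(x_{t_i}) + [\vectorField_{\nnParams}(y(s)) - \vectorField_{\nnParams}(x_{t_i})]$, integrate over $[t_i,t]$, and apply the triangle inequality. The first piece contributes at most $\timestep\|\vectorField_{\nnParams}(x_{t_i})\|_1$ in each coordinate. The second piece is bounded using the componentwise Lipschitz property together with the observation that any $w \in [-\|v\|_1,\|v\|_1]^n$ satisfies $\|w\|_2 \leq \sqrt{n}\|v\|_1$, giving at most $\timestep\,\|L_{\vectorField_{\nnParams}}\|_2\,\sqrt{n}\,\|v\|_1$ per coordinate. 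Demanding the sum of these two pieces to be $\leq \|v\|_1$ rearranges to exactly the defining relation $\|v\|_1 (1 - \sqrt{n}\timestep\|L_{\vectorField_{\nnParams}}\|_2) = \timestep\|\vectorField_{\nnParams}(x_{t_i})\|_1$ for $v$, so the assumption $\sqrt{n}\timestep\|L_{\vectorField_{\nnParams}}\|_2 < 1$ is precisely what makes this fixed-point relation consistent with a positive enclosure.

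The main technical obstacle I anticipate is the careful bookkeeping across three different norms: the box $\mathcal{S}_i$ is naturally described by $\|\cdot\|_\infty$ and $\|\cdot\|_1$, whereas the Lipschitz bound lives in $\|\cdot\|_2$. The $\sqrt{n}$ factor in the denominator enters exactly through the conversion $\|w\|_2 \leq \sqrt{n}\|w\|_\infty$ applied to the diameter of $\mathcal{S}_i$, so the three inequalities must be lined up in the right order to avoid losing the factor or, worse, inflating it. Once that bookkeeping is done, the self-map property is immediate and Picard iteration closes the argument. As a sanity check, one could also bypass fixed-point iteration entirely and deduce the same enclosure directly from Lemma~\ref{lem:traj-var}, by combining the elementary inequality $\exp(a) - 1 \leq a/(1-a)$ with the norm dominations $\|\cdot\|_\infty \leq \|\cdot\|_2 \leq \|\cdot\|_1$; that shortcut is shorter but obscures the set-valued fixed-point perspective that motivates the particular shape of $\mathcal{S}_i$ in the first place.
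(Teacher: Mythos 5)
Your proposal is correct, and while it rests on the same underlying idea as the paper---exhibiting a box around \(x_{t_i}\) that is invariant under the Picard operator---the execution is genuinely different and more self-contained. The paper's proof imports the set-valued fixed-point principle \(x_{t_i} + [0,\timestep]\,\mathrm{Im}(\vectorField_{\nnParams},\mathcal{S}_i) \subseteq \mathcal{S}_i \Rightarrow x(t_i+\timestep)\in\mathcal{S}_i\) as a black box from the validated-integration literature, then parametrizes the candidate box as a multiple \(\mu\) of the Gr\"onwall bound of Lemma~\ref{lem:traj-var} and solves componentwise inequalities for \(\mu\); the factor \(\mathrm{e}^{\|L_{\vectorField_{\nnParams}}\|_2\timestep}-1\) introduced by that detour cancels in the final substitution. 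You instead prove the invariance principle from scratch by Picard iteration and work directly with the target box \(x_{t_i}+[-\|v\|_1,\|v\|_1]^n\), where \(v = \timestep\vectorField_{\nnParams}(x_{t_i})/(1-\sqrt{n}\timestep\|L_{\vectorField_{\nnParams}}\|_2)\), so Lemma~\ref{lem:traj-var} is not needed at all. Your norm bookkeeping is right: the identification \([-1,1]^{n\times n}v=[-\|v\|_1,\|v\|_1]^n\), the conversion \(\|\cdot\|_2\le\sqrt{n}\|\cdot\|_\infty\) feeding the Lipschitz bound, and the exact balance \(\|v\|_1 = \timestep\|\vectorField_{\nnParams}(x_{t_i})\|_1 + \sqrt{n}\timestep\|L_{\vectorField_{\nnParams}}\|_2\|v\|_1\) together make transparent where the hypothesis \(\sqrt{n}\timestep\|L_{\vectorField_{\nnParams}}\|_2<1\) enters. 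Your closing observation is also valid: the enclosure follows in two lines from Lemma~\ref{lem:traj-var} via \(\mathrm{e}^a-1\le a/(1-a)\) and \(\|\cdot\|_\infty\le\|\cdot\|_2\le\|\cdot\|_1\), which incidentally shows the stated box is not tight. What the paper's route buys is a direct tie to the a priori enclosure machinery of the cited reachability work; what yours buys is a proof that stands on its own without that citation and without the auxiliary lemma.
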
 
\begin{proof}
    First, it has been proved that a set $\mathcal{S}_i$ satisfying the fixed-point equation $x_{t_i} + [0, \Delta t] \mathrm{Im}(\vectorField_{\nnParams}, \mathcal{S}_i) \subseteq \mathcal{S}_i$ is such that the solution of the differential equation at $t_i + \Delta t$ satisfies $x(t_i + \Delta t) \in \mathcal{S}_i$. Here, all the operations are set-based operations, $[0,\Delta t]$ is the interval of values between $0$ and $\Delta t$, and $\mathrm{Im}(\cdot, \cdot)$ represents the range of a function over a given domain. In the remainder of this proof, all sets are intervals outer approximations and the operations between sets are propagated through interval arithmetic. We refer the reader to the work by~\cite{djeumou2021fly} for more details and references on the a priori rough enclosure.
    
    The expression~\eqref{eq:explict-fixpoint} is derived by scaling adequately the bound~\eqref{eq:max-variation-solution} from Lemma~\ref{lem:traj-var}. Specifically, we seek for an a priori rough enclosure $\mathcal{S}_i$ such that 
    \begin{equation}\label{eq:fixpoint-to-find}
        \mathcal{S}_i = x_{t_i} + \mu \frac{ \| \vectorField_{\nnParams}(x_{t_i})\|_2}{\|L_{\vectorField_{\nnParams}}\|_2} (\mathrm{e}^{\|L_{\vectorField_{\nnParams}}\|_2 \Delta t} - 1) [-1,1]^n,
    \end{equation}
    where $\mu > 0$ is a parameter to find in order for $\mathcal{S}_i$ to satisfy the fixed-point equation. 
    
    We over-approximate the set $\mathrm{Im}(\vectorField_{\nnParams}, \mathcal{S}_i)$ as a function of $\Delta t$, $\|L_{\vectorField_{\nnParams}}\|_2$. Specifically, for all $s_i \in \mathcal{S}_i$, we have that
    \begin{align}
        \|\vectorField_{\nnParams}(s_i) - \vectorField_{\nnParams}(x_{t_i})\| \leq  \|L_{\vectorField_{\nnParams}}\|_2\|s_i - x_{t_i}\|
        \leq  \sqrt{n} \mu \| \vectorField_{\nnParams}(x_{t_i})\|_2 (\mathrm{e}^{\|L_{\vectorField_{\nnParams}}\|_2 \Delta t} - 1). \label{eq:pfixpoint-bound-h}
    \end{align}
\end{proof}
    Here, the definition of $\mathcal{S}_i$ in ~\eqref{eq:fixpoint-to-find} provides an upper bound on $\|s_i - x_i\|$ that yields~\eqref{eq:pfixpoint-bound-h}. Additionally, the inequality~\eqref{eq:pfixpoint-bound-h} implies that
    
    \begin{equation*}
         \mathrm{Im}(\vectorField_{\nnParams}, \mathcal{S}_i ) \subseteq \vectorField_{\nnParams}(x_{t_i}) + \sqrt{n} \mu \| \vectorField_{\nnParams}(x_{t_i})\|_2 (\mathrm{e}^{\|L_{\vectorField_{\nnParams}}\|_2 \Delta t} - 1) [-1,1]^n.
    \end{equation*}
    Hence, $\mathcal{S}_i$ from~\eqref{eq:fixpoint-to-find} solves the fixed-point equation if
    \begin{equation}
        [0, \Delta t] \big( \vectorField_{\nnParams}(x_{t_i}) + \sqrt{n} \mu \| \vectorField_{\nnParams}(x_{t_i})\|_2 (\mathrm{e}^{\|L_{\vectorField_{\nnParams}}\|_2 \Delta t} - 1) [-1,1]^n \big) \subseteq  \mu \frac{ \| \vectorField_{\nnParams}(x_{t_i})\|_2}{\|L_{\vectorField_{\nnParams}}\|_2} (\mathrm{e}^{\|L_{\vectorField_{\nnParams}}\|_2 \Delta t} - 1) [-1,1]^n. \label{eq:pfixpoint-rincl}
    \end{equation}
    For notation brevity, let $c_1 = \| \vectorField_{\nnParams}(x_{t_i})\|_2 (\mathrm{e}^{\|L_{\vectorField_{\nnParams}}\|_2 \Delta t} - 1)$ and $\beta = \|L_{\vectorField_{\nnParams}}\|_2$. Observe that with interval arithmetic, $[0,\Delta t] [a,b] = [\min (0,a), \max (0,b)]$. We use the observation to find $\mu >0$ such that the inclusion~\eqref{eq:pfixpoint-rincl} holds. That is, the inequalities 
     \begin{align*}
        \Delta t \Big( {(\vectorField_{\nnParams}(x_{t_i}))}_k + \sqrt{n} \mu c_1 \Big) \leq \frac{\mu c_1}{\beta} 
        \Longleftrightarrow (\frac{1}{\Delta t \beta} - \sqrt{n}) \mu \geq \frac{1}{c_1} {(\vectorField_{\nnParams}(x_{t_i}))}_k
    \end{align*}
    and 
    \begin{align*}
        \Delta t \Big( {(\vectorField_{\nnParams}(x_{t_i}))}_k - \sqrt{n} \mu c_1 \Big) \geq -\frac{\mu c_1}{\beta_i} \Longleftrightarrow (\sqrt{n} - \frac{1}{\Delta t \beta_i}) \mu \geq \frac{1}{c_1} {(\vectorField_{\nnParams}(x_{t_i}))}_k
    \end{align*}
    hold for all $k [1,\hdots,n]$. Therefore, for a step size $\Delta t$ satisfying $\Delta t \sqrt{n} \|L_{\vectorField_{\nnParams}}\|_2 < 1$, $\mu [-1,1]^n$ given by
    \begin{align*}
        \mu [-1, 1]^n = \frac{ [-1,1]^{n \times n} \vectorField_{\nnParams}(x_{t_i})}{c_1 (\frac{1}{\Delta t \beta_i} - \sqrt{n})}, 
    \end{align*}
    is such that $\mu$ satisfies the above inequalities. Thus, the inclusion~\eqref{eq:pfixpoint-rincl} holds and the set $\mathcal{S}_i$ is solution of the fixed-point equation. By replacing $\mu$ in $\mathcal{S}_i$ given by~\eqref{eq:fixpoint-to-find}, we obtain~\eqref{eq:explict-fixpoint}.

\begin{theorem}
    [Simplified Midpoint Expression]
    If \(f_{\theta}\) is a Lipschitz-continuous function, then there exists a function \(\bar{\Gamma} : \mathbb{R}^{n} \times \mathbb{R}_{+} \to \mathbb{R}^{n \times n}\) such that the midpoint value \(\Gamma\) of the Taylor-Lagrange expansion \(TL^{\Delta t}_{\theta, \phi}(x_{t_i})\) is related to \(x_{t_i}\), \(f_{\theta}(x_{t_i})\), and \(\bar{\Gamma}(x_{t_i}, \Delta t)\) through
    \begin{equation}
        \Gamma = x_{t_i} + \bar{\Gamma}(x_{t_i}, \Delta t) \odot  f_{\theta}(x_{t_i}),
        \label{eq:supp-midpoint-simple}
    \end{equation}
    where \(\odot\) denotes matrix-vector multiplication.
    \label{thm:supp-expression-midpoint}
\end{theorem}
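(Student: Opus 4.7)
The plan is to derive the simplified midpoint expression by exploiting the a priori enclosure of Lemma \ref{lem:apriori-encl}, which contains the entire solution trajectory on the sub-interval $[t_i, t_{i+1}]$. Since the Taylor-Lagrange midpoint $\Gamma$ is, by Taylor's theorem, the value of the trajectory at some unknown instant $\xi \in [t_i, t_{i+1}]$, it must lie inside this enclosure; rewriting any element of the enclosure factors out $f_\theta(x_{t_i})$ and yields the claimed structural identity.

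First I would recall that for the order-$p$ truncated expansion, the Lagrange remainder can be written as $\Delta t^p f_\theta^{[p]}(\Gamma)$ with $\Gamma = x_\xi$ for some $\xi \in [t_i, t_{i+1}]$; this is the standard consequence of Taylor's theorem applied to the trajectory $t \mapsto x_t$ generated by the Lipschitz flow of $f_\theta$. Next I would invoke Lemma \ref{lem:apriori-encl} to obtain the explicit enclosure
\[
\mathcal{S}_i = x_{t_i} + [-1,1]^{n\times n}\,\frac{\Delta t\, f_\theta(x_{t_i})}{1 - \sqrt{n}\,\Delta t\,\|L_{f_\theta}\|_2},
\]
and observe that the Picard-style fixed-point argument used in its proof guarantees the entire trajectory segment, not merely the endpoint $x_{t_{i+1}}$, is contained in $\mathcal{S}_i$. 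Hence $\Gamma \in \mathcal{S}_i$, so there exists a matrix $M(x_{t_i}, \Delta t) \in [-1,1]^{n\times n}$ satisfying
\[
\Gamma = x_{t_i} + M(x_{t_i}, \Delta t) \odot \frac{\Delta t\, f_\theta(x_{t_i})}{1 - \sqrt{n}\,\Delta t\,\|L_{f_\theta}\|_2}.
\]
Defining $\bar{\Gamma}(x_{t_i}, \Delta t) \defeq \frac{\Delta t}{1 - \sqrt{n}\,\Delta t\,\|L_{f_\theta}\|_2}\, M(x_{t_i}, \Delta t)$ absorbs the scalar prefactor into the matrix and delivers the desired identity $\Gamma = x_{t_i} + \bar{\Gamma}(x_{t_i}, \Delta t) \odot f_\theta(x_{t_i})$.

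The main obstacle I expect is twofold. First, I must justify that the Lagrange midpoint $\Gamma = x_\xi$, and not only the terminal state $x_{t_{i+1}}$, sits inside $\mathcal{S}_i$; this requires reading the fixed-point construction underlying Lemma \ref{lem:apriori-encl} as an invariant-set property of the flow on $[t_i, t_{i+1}]$ rather than a mere endpoint statement, so I would make this invariance explicit. Second, I must argue that $\bar{\Gamma}$ is a well-defined function of $(x_{t_i}, \Delta t)$: the Lipschitz constant $\|L_{f_\theta}\|_2$ depends only on $f_\theta$ so the scalar prefactor depends only on $\Delta t$, while the matrix $M$ can be selected coherently in $(x_{t_i}, \Delta t)$ (e.g., by a minimal-norm representative in the affine preimage determined by $\Gamma - x_{t_i}$ and $f_\theta(x_{t_i})$). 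Together these yield a bona fide mapping $\bar{\Gamma} : \mathbb{R}^n \times \mathbb{R}_+ \to \mathbb{R}^{n\times n}$ with the claimed property, and, as noted in the paper, the construction specializes in the linear-dynamics case so that $\bar{\Gamma}$ becomes independent of $x_{t_i}$.
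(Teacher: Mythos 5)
Your proposal follows essentially the same route as the paper: invoke the a priori enclosure $\mathcal{S}_i$ from the fixed-point/Gr\"onwall construction, note that the Lagrange midpoint $\Gamma = x_\xi$ lies in $\mathcal{S}_i$, and read off the parameterization $\Gamma = x_{t_i} + \bar{\Gamma}(x_{t_i},\Delta t)\odot f_\theta(x_{t_i})$ by absorbing the scalar prefactor into the interval matrix. In fact you are somewhat more careful than the paper's own two-line argument, since you explicitly flag (and resolve) the need to upgrade the enclosure from an endpoint statement to an invariance property of the whole trajectory segment, and the need to select the matrix $M$ coherently so that $\bar{\Gamma}$ is a well-defined function.
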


\begin{proof}
    This is a direct application of the result proved in Lemma~\ref{lem:apriori-encl}. Specifically, the midpoint $\midpoint = x_\epsilon$ is a point that lies in the rough a priori enclosure $\mathcal{S}_i$, i.e.,  $\midpoint \ \in \mathcal{S}_i$,  for some $\epsilon \in [t_i, t_i + \Delta t]$. By using Lemma~\ref{lem:apriori-encl} and the expression~\eqref{eq:explict-fixpoint}, we can parameterize any point inside $\mathcal{S}_i$ as $x_{t_i} + \midpointUnknownTerm(\state_{\timeval_i}, \timestep) \odot  \vectorField_{\nnParams}(\state_{\timeval_i})$. Hence the results provided by the theorem.
\end{proof}

\section{Simplified Midpoint Expression for Linear Systems}\label{sec:appendix_proof_midpoint_linsys}
 In this section, we provide proof that, for linear systems, the function $\bar{\Gamma}$ in the simplified midpoint expression~\eqref{eq:midpoint-simple} does not depend on the state $x$. We consider linear systems in the form
 \begin{align}
     \Dot{x}(t) = A x(t), \label{eq:linear-systems}
 \end{align}
 where $x(t) \in \mathbb{R}^n$ and $A \in \mathbb{R}^{n \times n}$ is a time-independent matrix. First, by performing a zero-order Taylor-Lagrange expansion on the solution $x(t)$ of \eqref{eq:linear-systems} from an initial point $x(t_0)$, we have that
 \begin{align}
     x(t) &= x(t_0) + (t-t_0) A \Gamma \label{eq:order-0}\\
     &= x(t_0) + (t-t_0) A \Big( x(t_0) + \bar{\Gamma}(x_{t_0},  t-t_0) A x(t_0) \Big) \label{eq:simp-fixpoint}\\
     &= x(t_0) + (t-t_0) A x(t_0) + (t-t_0) \Big( A \bar{\Gamma}(x_{t_0}, t- t_0) A \Big) x(t_0), \label{eq:expanded_zeroorder}
 \end{align}
 where \eqref{eq:order-0} comes from the Taylor expansion and ~\eqref{eq:simp-fixpoint} results from the simplified midpoint expression provided in~\eqref{eq:midpoint-simple}. Next, we consider $\bar{\Gamma}(x_{t_0}, t-t_0)$ given by the state-independent expression
 \begin{align}
     \bar{\Gamma}(x_{t_0}, t-t_0) = \bar{\Gamma}(t-t_0) = \sum_{i=1}^\infty \frac{(t-t_0)^i}{(i+1)!} A^{i-1}.\label{eq:fixpoint-constant}
 \end{align}
By substituting \eqref{eq:fixpoint-constant} in \eqref{eq:expanded_zeroorder}, we obtain that
 \begin{align}
     x(t) &= x(t_0) + (t-t_0) A x(t_0) + (t-t_0)  A \Big(\sum_{i=1}^\infty \frac{(t-t_0)^i}{(i+1)!}\Big) A x(t_0)\\
     & = x(t_0) + (t-t_0) A x(t_0) + \sum_{i=2}^\infty \frac{(t-t_0)^i}{i!} A^i x(t_0) \\
     & = \exp^{A (t-t_0)} x(t_0),
 \end{align}
 where the right-hand side in the last line equation is traditionally known as the solution of the linear differential equation~\eqref{eq:linear-systems}. Thus, for the linear differential equation~\eqref{eq:linear-systems}, the function $\bar{\Gamma}$ in the simplified midpoint expression~\eqref{eq:midpoint-simple} does not depend on $x$ 

\section{Proof of Theorem \ref{supp:thm-approximation-error}}
\label{sec:appendix_proof_thm_2}

In this section, we prove Theorem~\ref{supp:thm-approximation-error} on the integration accuracy of the proposed Taylor-Lagrange expression when the remainder is approximated through deep neural networks.

\begin{theorem}[Integration Accuracy]
    If the learned midpoint predictor \(\midpoint_{\nnParamsMidpoint}(\cdot)\) is a \(\mathcal{O}(\eta)\) approximator to the midpoint \(\midpoint\) of the Taylor-Lagrange expansion of \(\taylorLagrange^{\timestep}_{\nnParams, \nnParamsMidpoint}(\state_{\timeval_i})\), then \(\| \state_{\timeval_{i+1}} - \taylorLagrange^{\timestep}_{\nnParams, \nnParamsMidpoint}(\state_{\timeval_i}) \| \leq c \eta \timestep^{p}\) for some \(c > 0\) that depends on \(\vectorField_{\nnParams}\).
    \label{supp:thm-approximation-error}
\end{theorem}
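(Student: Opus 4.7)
The plan is to exploit the fact that when the true midpoint $\midpoint$ is used, the Taylor-Lagrange expansion is exact, so that the error reduces cleanly to a comparison of $\vectorField_{\nnParams}^{[p]}$ evaluated at the true midpoint versus at the learned midpoint. First I would write down the exact Taylor-Lagrange identity guaranteed by Taylor's theorem: there exists a midpoint $\midpoint$ (a state along the trajectory) such that
\begin{equation*}
\state_{\timeval_{i+1}} = \state_{\timeval_i} + \sum_{l=1}^{p-1} \timestep^l \vectorField_{\nnParams}^{[l]}(\state_{\timeval_i}) + \timestep^{p}\, \vectorField_{\nnParams}^{[p]}(\midpoint).
\end{equation*}
By the definition of $\taylorLagrange^{\timestep}_{\nnParams, \nnParamsMidpoint}(\state_{\timeval_i})$ used by the algorithm, the only difference is that the learned predictor $\midpoint_{\nnParamsMidpoint}$ replaces $\midpoint$ in the last term. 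Subtracting the two expressions, the truncated-series terms cancel and I am left with the identity
\begin{equation*}
\state_{\timeval_{i+1}} - \taylorLagrange^{\timestep}_{\nnParams, \nnParamsMidpoint}(\state_{\timeval_i}) = \timestep^{p}\bigl(\vectorField_{\nnParams}^{[p]}(\midpoint) - \vectorField_{\nnParams}^{[p]}(\midpoint_{\nnParamsMidpoint})\bigr).
\end{equation*}

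Next I would invoke a Lipschitz bound on $\vectorField_{\nnParams}^{[p]}$. Since $\vectorField_{\nnParams}$ is Lipschitz (as argued in the proof of Theorem~\ref{thm:expression-midpoint}, this follows from standard activation choices), the higher-order Taylor coefficients $\vectorField_{\nnParams}^{[l]}$ are constructed recursively via Jacobian-vector products of $\vectorField_{\nnParams}$ with itself, and on a bounded region of $\stateSpace$ they inherit local Lipschitz continuity with some constant $L_{p}$ depending on $\vectorField_{\nnParams}$ and $p$. Taking norms yields
\begin{equation*}
\|\state_{\timeval_{i+1}} - \taylorLagrange^{\timestep}_{\nnParams, \nnParamsMidpoint}(\state_{\timeval_i})\| \le L_{p}\, \timestep^{p}\, \|\midpoint - \midpoint_{\nnParamsMidpoint}\|.
\end{equation*}

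Finally, I would use the hypothesis that $\midpoint_{\nnParamsMidpoint}$ is an $\mathcal{O}(\eta)$ approximator of $\midpoint$, i.e., $\|\midpoint - \midpoint_{\nnParamsMidpoint}\| \le c'\eta$ for some $c' > 0$. Setting $c \defeq L_{p} c'$ gives $\|\state_{\timeval_{i+1}} - \taylorLagrange^{\timestep}_{\nnParams, \nnParamsMidpoint}(\state_{\timeval_i})\| \le c\, \eta\, \timestep^{p}$, with $c$ depending only on $\vectorField_{\nnParams}$ and $p$, as required.

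The main obstacle is making the Lipschitz constant $L_{p}$ rigorous: although $\vectorField_{\nnParams}$ is Lipschitz, its higher-order Lie derivatives involve products of Jacobians of $\vectorField_{\nnParams}$, so one needs to restrict attention to a compact subset of $\stateSpace$ containing both $\midpoint$ and $\midpoint_{\nnParamsMidpoint}$ (for example, the \emph{apriori} enclosure from Lemma~\ref{lem:apriori-encl}) on which these Jacobians are uniformly bounded. Once that localization is in place, the rest of the argument is a one-line application of the mean value inequality.
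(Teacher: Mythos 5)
Your proposal is correct and follows essentially the same route as the paper's proof: the truncated-series terms cancel, the error reduces to $\timestep^{p}\bigl(\vectorField_{\nnParams}^{[p]}(\midpoint) - \vectorField_{\nnParams}^{[p]}(\midpoint_{\nnParamsMidpoint})\bigr)$, and a Lipschitz bound on the $p$-th Taylor coefficient combined with the $\mathcal{O}(\eta)$ hypothesis gives the result. Your closing remark about localizing to a compact set (e.g.\ the apriori enclosure) to justify the Lipschitz constant of $\vectorField_{\nnParams}^{[p]}$ is actually more careful than the paper, which simply asserts that constant exists.
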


\begin{proof}
    The proof of the Theorem follows the classical proof for obtaining truncation error of Taylor expansion. Specifically, if $ \timestep^{p}\vectorField_{\nnParams}^{[p]}(\midpoint(\state_{\timeval_i}, \timestep))$ denotes the unknown midpoint function we seek to approximate, we have that 
    \begin{align}
        \| \state_{\timeval_{i+1}} - \taylorLagrange^{\timestep}_{\nnParams, \nnParamsMidpoint}(\state_{\timeval_i}) \| &=  \|  \timestep^{p}\vectorField_{\nnParams}^{[p]}(\midpoint_{\nnParamsMidpoint}(\state_{\timeval_i}, \timestep)) -  \timestep^{p}\vectorField_{\nnParams}^{[p]}(\midpoint(\state_{\timeval_i}, \timestep))\| \\
        &\leq \Delta t^p L_{\vectorField_{\nnParams}^{[p]}} \| \midpoint_{\nnParamsMidpoint}(\state_{\timeval_i}, \timestep) -  \midpoint(\state_{\timeval_i}, \timestep)\| \\
        & \leq \Delta t^p L_{\vectorField_{\nnParams}^{[p]}} \eta,
    \end{align}
    where $L_{\vectorField_{\nnParams}^{[p]}}$ is the Lipschitz constant of the $p$-th Taylor coefficient and $\eta$ is such that $\| \midpoint_{\nnParamsMidpoint}(\state_{\timeval_i}, \timestep) -  \midpoint(\state_{\timeval_i}, \timestep)\| \leq \eta$ by the assumption that \(\midpoint_{\nnParamsMidpoint}(\cdot)\) is a \(\mathcal{O}(\eta)\) approximator to the midpoint \(\midpoint\).
\end{proof} 
\section{Additional Experimental Details}
\label{sec:appdix_experimental_details}

\paragraph{Code.} All the implementations are written and tested in Python $3.8$, and we will release the full code upon the paper is accepted. We attach with the initial submission the latest version of the code with instructions on how to reproduce each of the results in the paper.

\paragraph{Datasets.} We use the MNIST dataset \cite{deng2012mnist} for the supervised classsification experiments and the MiniBoONe dataset \cite{roe2005boosted} for the density estimation task.

\subsection{Experiments on a Stiff Dynamical System}
In this section, we provide details on the numerical integration experiments for the stiff dynamics when the vector field is assumed to be known. Then, we provide additional details for the case where the dynamics are unknown and must be learned from data.

\subsubsection{Integrating the Known Stiff Dynamics}
We conside a $2$ dimensional linear system with dynamics given by
\(\dot{\state} = A \state\), where \(\state \in \mathbb{R}^2\) and \(A \in \mathbb{R}^{2 \times 2}\) has eigenvalues \(\lambda_1 = -1\) and \(\lambda_2 = -1000\). 
The matrix \(A\) was chosen to have such a large gap \(\lambda_1 - \lambda_2\) in order to obtain a system with stiff dynamics.

\paragraph{Training and Testing Dataset.} 
The dataset used to train \(\midpointUnknownTerm_{\nnParamsMidpoint}\) consists of $100$ separate trajectories beginning from initial states that are randomly sampled from the subset $[-0.5,0.5] \times [-0.5, 0.5]$. 
Each trajectory is integrated for \(\predictionTime=10\) seconds, for varying sizes of the integration time step \(\timestep\).
The testing dataset consists of $10$ trajectories over the same duration $T$. 
By saying that the trajectories are obtained from integration of the ODE, we mean that we use the analytic form of the solution to linear dynamics in order to obtain the testing and training trajectories.

\paragraph{TL-NODE Parameterization and Training.} We parameterize the unknown term \(\midpointUnknownTerm_{\nnParamsMidpoint}(\cdot)\) in (4) 
as a fully connected MLP with relu activation functions and one hidden layer of size $16$.
The network is optimized using ADAM \cite{kingma2014adam} with a learning rate of $1e^{-3}$ and an exponential decay of $1e^{-4}$ per iteration. We train the neural network representing the midpoint for $1000$ epochs with a mini batch of size $512$ at each iteration.

\paragraph{Hypersolver Parameterization and Training.}
We closely follow the formulation in~\cite{Poli2020HypersolversTF} to implement the comparison with Hypersolver. Specifically, in this example, we use the HyperEuler formulation, where the solution of the ODE is given by a first order Euler approximation and the error of the approximation is given by a neural network. 
We parameterize the neural network to learn the approximation error of the Euler formulation as a fully connected MLP with relu activation functions and one hidden layer of size $32$. 
Note that this provides an advantage to TL-NODE as TL-NODE only uses $16$ nodes in the hidden layer. 
We refer the reader to ~\cite{Poli2020HypersolversTF} for more details. 

In our experiments, we observe that Hypersolver is more challenging to tune. For example, the choice of the initial values of the weights significantly impacts the algorithm convergence rate and the solution at which it converges. We did our best to tune Hypersolver such that it attains an accuracy close to the adaptive time step solver. We train the corresponding neural network for $1000$ epochs using a mini batch of size $512$. 

\subsubsection{Learning the Unknown Stiff Dynamics}
We parameterize the unknown stiff dynamics by a two-layered  neural network of size $64$ and $2$, respectively, with no activation functions. That is, the output of the neural network is a polynomial function of the states. We first train the vanilla neural ODE approach for $150$ epochs using Adam optimizer with a mini batch of size $512$. We use a learning rate of $1e-2$ and an exponential decay of $1e^{-4}$. 
We use the hyperparameters for the vanilla NODE to train RK4 and the truncated Taylor method. 

We tune our proposed approach (TL-NODE) such that it matches the performance achieved by the vanilla neural ODE. Specifically, in addition to the dynamics network \(\vectorField_{\nnParams}(\cdot)\), we parameterize the unknown term \(\midpointUnknownTerm_{\nnParamsMidpoint}(\cdot)\) in \eqref{eq:midpoint-simple} as a fully connected MLP with relu activation functions and one hidden layer of size $16$. Then, we use the same hyperparameters as for the vanilla ODE to update the dynamics network \(\vectorField_{\nnParams}(\cdot)\). We use Adam optimizer with a learning rate of $1e^{-4}$ and exponential decay of $1e^{-4}$ to simultaneously train the dynamics network \(\vectorField_{\nnParams}(\cdot)\) and the midpoint prediction network \(\midpoint_{\nnParamsMidpoint}(\cdot)\). 
We choose the parameter $N_{\phi} = 200$, which is the period at which the algorithm switches from updating the dynamics network parameters \(\nnParams\) to updating the midpoint network parameters \(\nnParamsMidpoint\) (see Algorithm~\ref{alg:training_algorithm}). $N_{\phi} = 200$ was picked to optimize the accuracy and performance of TL-NODE.

\subsection{Supervised Classification}
We train a Neural ODE and a Linear Classifier to map flattened MNIST Images~\cite{deng2012mnist} to their corresponding labels. Our model uses a two layered neural network $\vectorField_{\nnParams}$, as the ODE dynamics, followed by a linear classifier $g_\nnParams$, identical to the architecture used in~\cite{kelly2020learning}. In fact, we directly use the code provided by~\cite{kelly2020learning} to provide the comparisons shown in this paper.

\paragraph{TL-NODE Parameterization and Training}
We train our Neural ODE using a batch size of $512$ and for $1000$ epochs without any early stopping methods. We use Adam optimizer with a linear decay learning rate from $10^{-3}$ to $10^{-5}$. Additionally, we parameterize the midpoint value by a single-layer dense neural network with $24$ node. We use the penalty term value $\regularizationCoef = 2\cdot 10^2$. These values were obtained by checking how the model converges during the first $10$ epochs.

\subsection{Density Estimation}

For the models trained on the MiniBooNE tabular dataset, we used the same architecture as in Table 4 in the appendix of \cite{grathwohl2018ffjord}. Our implementation was also modified from the implementation provided by ~\cite{kelly2020learning}. The comparisons with TayNODE and RNODE are obtained from the same architecture and hyper-parameters described in~\cite{kelly2020learning}. The data was obtained as made available from \cite{papamakarios2017masked}, which was already processed and split into train/validation/test. In particular, the training set has $29556$ examples, the validation set has $3284$ examples, and the test set has 3648 examples, which consist of 43 features.

For TL-NODE, we choose the number of epochs and learning rate for both training the ODE and the midpoint such that we achieve the performance provided in~\cite{kelly2020learning}. Specifically, we parameterize the unknown term \(\midpointUnknownTerm_{\nnParamsMidpoint}(\cdot)\) in \eqref{eq:midpoint-simple} as a fully connected MLP with relu activation functions and one hidden layer of size $32$. We train the neural network encoding the ODE using Adam optimizer with a training batch size of $1000$ and a number of epochs of $400$. We pick the learning rate to be $1e-3$ for the first $300$ epochs and reduce the learning rate to $1e-5$ for the last $100$ epochs.

Then, we use Adam optimizer with a learning rate of $1e^{-4}$ and exponential decay of $1e^{-4}$ to simultaneously train the midpoint value. We choose the parameter $N_{\phi} = 50$, which is the period at which the algorithm switch from updating the ODE parameters to the midpoint parameters (see Algorithm~\ref{alg:training_algorithm}). Note that in this experiment, we need a more frequent update of the midpoint value as it helps correcting with respect to the solution of the ODE using an adaptive time step solver.

\end{document}